\setlist[enumerate]{leftmargin=.5in}
\setlist[itemize]{leftmargin=.5in}
\crefname{hypothesis}{Hypothesis}{Hypotheses}
\author{XXX\thanks{XXX 
  (\email{XXX}, \url{XXX}).}
\and XXX\thanks{XXX
  (\email{XXX}, \email{XXX}).}
\and XXX\footnotemark[3]}
\renewcommand{\d}{\mathrm{d}}
\DeclareMathOperator{\dist}{dist}
\renewcommand{\d}{\mathrm{d}}
\renewcommand{\S}{\mathbb{S}}
\newcommand{\s}{\theta}
\title{A Mean-field Analysis of Deep ResNet and Beyond:\\
Towards Provable Optimization Via Overparameterization From Depth}
\author{Yiping Lu\thanks{Stanford University, ICME.  
  (\email{yplu@stanford.edu}, \url{https://web.stanford.edu/\~ yplu/}).}
\and Chao Ma\thanks{Princeton University 
  (\email{chaom@princeton.edu}).}
\and Yulong Lu\thanks{Department of Mathematics, Duke University,(\email{yulonglu@math.duke.edu})}\and Jianfeng Lu\thanks{Department of Mathematics, Department of Chemistry and Department of Physics, Duke University,(\email{jianfeng@math.duke.edu},\url{https://services.math.duke.edu/\~jianfeng/})} \and Lexing Ying\thanks{Department of Mathematics, Stanford University,(\email{lexing@stanford.edu},\url{https://web.stanford.edu/\~lexing/})}}
\begin{document}

\maketitle
\begin{abstract}
 Training deep neural networks with stochastic gradient descent (SGD) can often achieve zero training loss on real-world tasks although the optimization landscape is known to be highly non-convex. To understand the success of SGD for training deep neural networks, this work presents a mean-field analysis of deep residual networks, based on a line of works that interpret the continuum limit of the deep residual network as an ordinary differential equation when the network capacity tends to infinity. Specifically, we propose a \textbf{new continuum limit} of deep residual networks, which enjoys a good landscape in the sense that \textbf{every local minimizer is global}. 
This characterization enables us to derive the first global convergence result for multilayer neural networks in the mean-field regime. Furthermore, without assuming the convexity of the loss landscape, our proof relies on a zero-loss assumption at the global minimizer that can be achieved when the model shares a universal approximation property. Key to our result is the observation that a deep residual network resembles a shallow network ensemble~\cite{veit2016residual}, \emph{i.e.} a two-layer network. We bound the difference between the shallow network and our ResNet model via the adjoint sensitivity method, which enables us to apply existing mean-field analyses of two-layer networks to deep networks. Furthermore, we propose several novel training schemes based on the new continuous model, including one training procedure that switches the order of the residual blocks and results in strong empirical performance on the benchmark datasets. 
\end{abstract}

\begin{keywords}
  Mean–field Analysis, Deep Learning, Optimization
\end{keywords}

\begin{AMS}
  62H35 65D18 68U10 58C40 58J50
\end{AMS}

\section{Introduction}

Neural networks have become state-of-the-art models in numerous machine learning tasks and strong empirical performance is often achieved by deeper networks. One landmark example is the residual network (ResNet) \cite{he2016deep,he2016identity}, which can be efficiently optimized even at extremely large depth such as 1000 layers. However, there exists a gap between this empirical success and the theoretical understanding: ResNets can be trained to almost zero loss with standard stochastic gradient descent, yet it is known that larger depth leads to increasingly non-convex landscape even the the presence of residual connections \cite{yun2019deep}. While global convergence can be obtained in the so-called ``lazy'' regime e.g. \cite{jacot2018neural,du2018gradient}, such kernel models cannot capture fully-trained neural networks \cite{suzuki2018adaptivity,chizat2019lazy,ghorbani2019limitations}.

In this work, we aim to demonstrate the provable optimization of ResNet beyond the restrictive ``lazy'' regime. To do so, we build upon recent works that connect ordinary differential equation (ODE) models to infinite-depth neural networks~\cite{weinan2017proposal,lu2017beyond,sonoda2017double, haber2017stable,chen2018neural,dupont2019augmented, zhang2018dynamically,thorpe2018deep,sonoda2019transport,lu2019understanding}. Specifically, each residual block of a ResNet can be written as $x_{n+1}=x_n+\Delta t f(x_n,\theta_n)$, which can be seen as the Euler discretization of the ODE $\dot x_t=f(x,t)$. 
This turns training the neural network into solving an optimal control problem~\cite{li2017maximum,weinan2019mean,liu2019deep}, under which backpropagation can be understood as simulating the adjoint equation~\cite{chen2018neural,li2017maximum,pmlr-v80-li18b,zhang2019you,li2020scalable}. 
However, this analogy does not directly provide guarantees of global convergence even in the continuum limit.

\begin{figure}[t]
    \centering
    \includegraphics[width=3in]{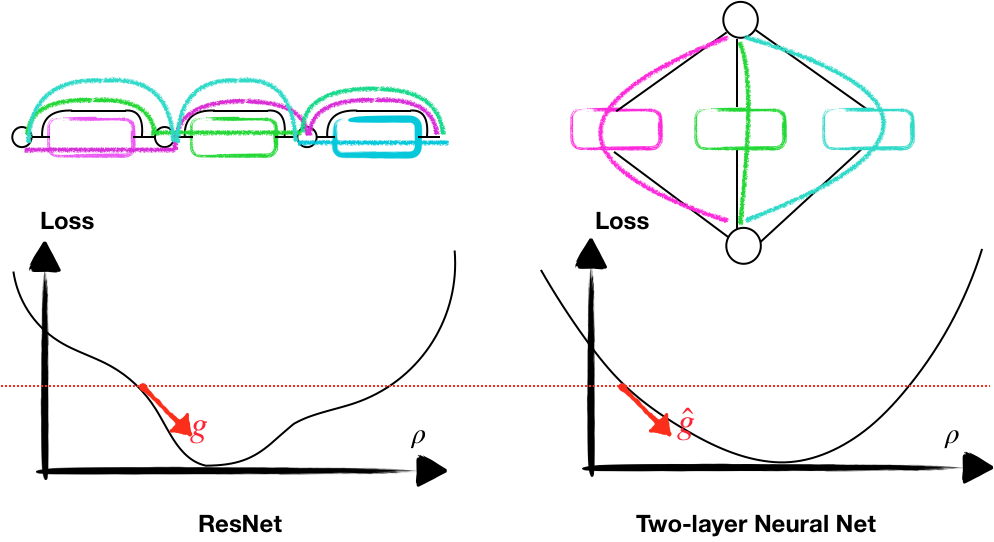}
    \caption{Illustration that ResNet behaves like shallow network ensemble, \emph{i.e.} a two-layer overparameterized neural network. The high-level intuition is to show that the gradient of the two models are at the same scale when the loss are comparable.}
    \label{fig:my_label}
\end{figure}

To address the problem of global convergence, we propose \textbf{a new limiting ODE model} of ResNets. 
Formally, we model deep ResNets via a mean-field ODE model
\begin{equation*}
\dot X_\rho(x,t) = \int_\theta f(X_\rho(x,t),\theta)\rho(\theta,t)d\theta
\end{equation*}
This model considers every residual block $f(\cdot,\theta_i)$ as a particle and optimizes over the empirical distribution of particles $\rho(\theta,t)$, where $\theta$ denotes the weight of the residual block and $t$ denotes the layer index of the residual block. Similar limiting objective function is proposed in \cite{hu2019mean,jabir2019mean,ma2019priori,e2019machine}. \cite{hu2019mean,jabir2019mean} have introduce a further convex condition on the Hamiltonian function which is generally not true for the realistic setting. \cite{ma2019priori} is mainly discussing the statistical property of the objective which is out of the scope of the discussing of this paper.  We consider properties of the loss landscape with respect to the distribution of weights, an approach similar to \cite{bengio2006convex,bach2017breaking}. Inspired by \cite{veit2016residual} that a deep ResNet behaves like an ensemble of shallow models, we compare a deep ResNet with its counterpart two-layer network and show that the gradients of the two models are close to each other. This leads us to conclude that, although the loss landscape may not be convex, every local minimizer is a global one.

\subsection{Contribution}

Our contributions can be summarized as follows:
\begin{itemize}
    \item We derive a new continuous depth limit of deep ResNets. In this new model, each residual block is regarded as a particle and the training dynamics is captured by the gradient flow on the distribution of the particles $\rho$.
    \item We analyze the loss landscape with respect to $\rho$ and show that all local minima have zero loss, which indicates that every local optima is global. This property leads to the conclusion that a full support stationary point of the Wasserstein gradient flow is a global optimum. To the best of our knowledge, this is the \textbf{first global convergence result for multi-layer neural networks in the mean-field regime} without the convexity assumption on the loss landscape.
    \item We propose novel numerical schemes to approximate the mean-field limit of the deep ResNets and demonstrate that they achieves superior empirical results on real-world datasets.
\end{itemize}

\subsection{Related Work}

\paragraph{Mean-Field Limit and Global Convergence.}

Recent works have explored the global convergence of two-layer neural networks by studying suitable scaling limits of the stochastic gradient descent of two-layer neural network when the width is sent to infinity and the second layer scaled by one over the width of the neural network~\cite{nitanda2017stochastic, mei2018mean,rotskoff2018neural,chizat2018global,sirignano2019mean}. Though global convergence can be obtained under certain conditions for two-layer networks, it is highly nontrivial to extend this framework to multi-layer neural networks: recent attempts \cite{araujo2019mean,sirignano2019mean,nguyen2019mean,fang2019convex} do not address realistic neural architectures directly or provide conditions for global convergence. 
Parallel to the mean-field regime, \cite{jacot2018neural,du2018gradient,allen2018convergence,zou2018stochastic,oymak2019towards} provided global convergence results for multi-layer networks in the so-called "lazy" or kernel regime. However, this description of deep neural networks is rather limited: the scaling of initialization forces the distance traveled by each parameter to vanish asymptotically \cite{chizat2019lazy}, and thus training becomes equivalent to kernel regression with respect to \emph{neural tangent kernel}~\cite{arora2019exact,jacot2018neural}.
On the other hand, it is well-known that properly trained neural networks can outperform kernel models in learning various target functions \cite{wei2019regularization,suzuki2018adaptivity,ghorbani2019limitations,ba2020generalization,allen2019can}. In contrast, the mean-field regime considered in this work does not reduce training into kernel regression; in other words, the mean-field setting allows neurons to travel further and learn adaptive features.

\paragraph{Landscape of ResNets.}  \cite{li2017convergence,liu2019towards} provided convergence results of gradient descent on two-layer residual neural networks and showed that the global minimum is unique. 
In parallel, \cite{shamir2018resnets,kawaguchi2019depth} showed that when the network consists of one residual block the gradient descent solution is provably better than a linear classifier. However, recent work also pointed out that these positive results may not hold true for deep ResNets composed of multiple residual blocks. Regarding deeper models, \cite{hardt2016identity,bartlett2019gradient,wu2019global} proved the global convergence of the gradient descent for training deep {\em linear} ResNets. Yet it is known that even mild nonlinear activation functions can destroy these good landscape properties \cite{yun2018small}. In addition, \cite{bartlett2018representing} considered a
ResNet model with compositions of close-to-identity functions, and provided convergence result regarding the Fr\'echet gradient. However, \cite{bartlett2018representing} also pointed out that such conclusion may no longer hold for a realistic ResNet model. Our paper fills this gap by introducing a new continuous model and providing conditions for the global convergence beyond the previously considered kernel regime  \cite{du2018gradient,zhang2019training,allen2018convergence,zhang2019training}.

\subsection{Notations and Preliminaries}

\paragraph{Notations.} Let $\delta(\cdot)$ denote the Dirac mass and $1_\Omega$ be the indicator function on $\Omega$. We denote by $\mathcal{P}^2$ the set of probability measures endowed with the Wasserstein-2 distance (see below for definition). Let $\mu$ be the population distribution of the input data and the induced norm by $\|f\|_\mu=\sqrt{\mathbb{E}_{x\sim\mu} [f(x)^\top f(x)]}$.

\paragraph{Fr\'echet Derivative.} We extend the notion of the gradient to infinite dimensional space. For a functional $f:X\rightarrow \mathbb{R}$ defined on a Banach space $X$, the Fr\'echet derivative is an element in the dual space $df\in X^*$ that satisfies
\begin{align*}
\lim_{\delta\in X,\delta\rightarrow 0} \frac{f(x+\delta)-f(x)-df(\delta)}{\|\delta\|}=0, \quad \text{for all} \; x\in X.
\end{align*}
In this paper, $\frac{\delta f}{\delta X}$ is used to denote the Fr\'echet derivative.

\paragraph{Wasserstein Space.} The Wasserstein-$2$ distance between two probability measures $\mu,\nu\in\mathcal{P}(\mathbb{R}^d)$ is defined as
$$
W_2(\mu,\nu):=\left(\inf_{\gamma\in\mathcal{T}(\mu,\nu)}\int|y-x|^2 d\gamma(x,y)\right)^{1/2}.
$$
Here $\mathcal{T}(\mu,\nu)$ denotes the set of all couplings between $\mu$ and $\nu$, i.e., all probability measures $\gamma\in\mathcal{P}(\mathbb{R}^d\times\mathbb{R}^d)$ with marginals $\mu$ on the first factor and $\nu$ on the second.

\paragraph{Bounded Lipschitz norm.} We say that a sequence of measures $\mu_n \in \mathcal{M}(\mathbb{R}^d)$ \emph{weakly} (or \emph{narrowly}) converges to $\mu$ if, for all continuous and bounded function $\varphi: \mathbb{R}^d \to \mathbb{R}$ it holds $\int \varphi \d \mu_n \to \int \varphi \d \mu$. For sequences which are bounded in total variation norm, this is equivalent to the convergence in Bounded Lipschitz norm. The latter is defined, for $\mu \in \mathcal{M}(\mathbb{R}^d)$, as 
\begin{equation}\label{eq:BL}
\Vert \mu \Vert_{\text{BL}} := \sup \left\{  \int \varphi \, \d \mu\; ;\; \varphi: \mathbb{R}^d \to \mathbb{R}, \; \text{Lip}(\varphi)\leq 1,\; \Vert \varphi \Vert_\infty \leq 1 \right\}
\end{equation}
where $\text{Lip}(\varphi)$ is the smallest Lipschitz constant of $\varphi$ and $\Vert \cdot \Vert_\infty$ the supremum norm. 

\section{Limiting Model}
Following the observation that each residual block of a ResNet  $u_{n+1}=u_n+\Delta t f(u_n,\theta_n)$ can be considered as one step of the forward Euler approximation of the ODE $u_t=f(u,t)$~\cite{weinan2017proposal,lu2017beyond,sonoda2017double,haber2017stable}, a series of recent papers \cite{zhang2018dynamically,zhang2019you,chen2018neural,li2020scalable,li2017maximum,li2020scalable} analyzed the deep neural networks in the continuous limit. \cite{thorpe2018deep} proved the Gamma-convergence of ResNets in the asymptotic limit. However, there are two points of that approach that require further investigation. First, \cite{thorpe2018deep} introduced a regularization term $n\sum_{i=1}^n\|\theta_i-\theta_{i-1}\|^2$, where $n$ is the depth of the network. This regularization becomes stronger as the network gets deeper, which implies a more constrained space of functions that the network can represent.

Second, while the Gamma-convergence result is concerned with the convergence of the global minima of a sequence of energy functionals, it gives rather little information about the landscape of the limiting functional, which can be quite complicated for non-convex objective functions. Later work \cite{avelin2019neural} proved that  stochastic gradient descent of a deep ResNet with constant weight across layers converges to the gradient flow of loss using the ODE model. However, letting the weights of the ResNet be the same across all layers weakens the approximation power and makes optimization landscape more complicated. To address the reason behind the global convergence of the gradient flow, in this section, we propose a new continuous limiting model of the deep residual network.

\subsection{A New Continuous Model}

The goal is to minimize the $l_2$ loss function
\begin{equation}
E(\rho)=\mathbb{E}_{x\sim\mu} \Big[\frac{1}{2}\left(\left<w_1,X_\rho(x,1)\right>-y(x) \right)^2 \Big].  
\label{con:objective}
\end{equation}
over parameter distributions $\rho(\theta,t)$ for $\theta$ in a compact set $\Omega$ and $t\in[0,1]$. Here $X_\rho(x,t)$ is the solution of the ODE
\begin{equation}
\dot X_\rho(x,t) = \int_\theta f(X_\rho(x,t),\theta)\rho(\theta,t)d\theta,X_\rho(x,0) = \left<w_2,x\right> 
\label{con:model}
\end{equation}
The ODE \eqref{con:model} is understood in the integrated sense, \textit{i.e.}, for fixed distribution $\rho(\cdot,\cdot)$ and input $x \in \mathbb{R}^{d_1}$, the solution path $X_{\rho}(x, t), t \in [0, 1]$ satisfies
\begin{equation*}
X_{\rho}(x, t) = X_{\rho}(x, 0)  + \int_0^t \int_{\Omega} f(X_{\rho}(x, s), \theta) \rho(\theta, s) d\theta d s.
\end{equation*}
Here $y(x)=\mathbb{E}[y|x]\in \mathbb{R}$ is the function to be estimated. The parameter $w_2\in\mathbb{R}^{d_1\times d_2}$ represents the first convolution layer in the ResNet \cite{he2016deep,he2016identity}, which extracts feature before sending them to the residual blocks.  To simplify the analysis, we let $w_2$ to a predefined linear transformation (\emph{i.e.} not training the first layer parameters $w_2$) with the technical assumption that $\min\{\sigma(w_2)\}\ge \sigma_1$ and $\max\{\sigma(w_2)\}\le \sigma_2$, where $\sigma(w_2)$ denotes the set of singular values. We remark that this assumption is not unrealistic, for example \cite{oyallon2017scaling} let $w_2$ be a predefined wavelet transform and still achieved the state-of-the-art result on several benchmark datasets. Here $f(\cdot,\theta)$ is the residual block with parameter $\theta$ that aims to learn a feature transformation from $\mathbb{R}^{d_2}$ to $\mathbb{R}^{d_2}$. For simplicity, we assume that the residual block is a two layer neural network, thus $f(x,\theta)=\sigma(\theta x), \theta\in\Omega \subset \mathbb{R}^{d_2\times d_2}$ and $\sigma:\mathbb{R}\rightarrow\mathbb{R}$ is an activation function, such as sigmoid and relu. Note that in our notation $\sigma(\theta x)$ the activation function $\sigma$ is applied separately to each component of the vector. 

Finally, $w_1\in\mathbb{R}^{d_2\times 1}$ is a pooling operator that transfers the final feature $X_\rho(x,1)$ to the classification result and an $l_2$ loss function is used for example. We also assume that $w_1$ is a predefined linear transform with satisfies $\|w_1\|_2=1$, which can be easily achieved via an operator used in realistic architecture such as the global average pooling~\cite{lin2013network}. Before starting the analysis, we first list the necessary regularity assumptions.

\begin{assumption}
\label{assump:1}
\begin{enumerate}
\item (Boundedness of data and target distribution) The input data $x$ lies $\mu$-almost surely
  in a compact ball, \emph{i.e.} $\|x\|\le R_1$ for some constant $R_1>0$. At the same time the
  target function is also bounded $\|y(\cdot)\|_{\infty}\le R_2$ for some constant $R_2>0$.
\item (Lipschitz continuity of distribution with respect to depth) There exists a constant
  $C_\rho$ such that
  \[
  \|\rho(\cdot,t_1)-\rho(\cdot,t_2)\|_{BL}\le C_\rho |t_1-t_2|
  \]
  for all $t_1,t_2\in[0,1]$. 
\item The kernel $k(x_1, x_2) := g(x_1,x_2)=\sigma(x_1^\top x_2)$ is a universal kernel
  \cite{micchelli2006universal}, i.e.  the span of $\{k(x,\cdot):x\in \mathbb{R}^{d_2}\}$ is dense
  in $L^2$.
    \item (Locally Lipschitz derivative with sub-linear growth~\cite{chizat2018global}) There exists a family $\{Q_r\}_{r>0}$ of nested nonempty closed convex subsets of $\Omega$ that satisfies:
        \begin{itemize}
        \item $\{u\in\Omega\mid\dist(u,Q_r)\le r'\}\subset Q_{r+r'}$ for all $r,r'>0$.
        \item There exist constants $C_1,C_2>0$ such that
        $$\sup_{\theta\in Q_r, x} \|\nabla_x f(x,\theta)\|\le C_1+C_2 r$$ 
        holds for all $r>0$.  Also the gradient of $f(x,\theta)$ with respect to $x$ is a Lipschitz function with Lipschitz constant $L_r>0$.
        \item For each $r$, the gradient respect to the parameter $\theta$ is also bounded
        $$
        \sup_{\|x\|\le R_1,\theta\in Q_r}\|\nabla_\theta f(x,\theta)\|\le C_{3,r}
        $$
        for some constant $C_{3,r}$.
    \end{itemize}
\end{enumerate}
\end{assumption}

\begin{remark}
Let us elaborate on these assumptions in the neural network setting. For Assumption 1.4, $k(x_1, x_2) := g(x_1,x_2)=\sigma(x_1^\top x_2)$ is a universal kernel holds for the sigmoid and ReLU activation function.  The local regularity Assumption 1.5 concerning function $f(x,\theta)$ can easily be satisfied, for $\nabla_\theta \sigma(\theta^\top x)=\sigma'(\theta^\top x)x$ and $\nabla_x\sigma(\theta^\top x)=\sigma'(\theta^\top x)\theta$. Hence, in order to satisfy the local regularity condition, one possible solution is that we utlize a Lipschitz gradient activation function and set the local set $Q_r$ to be a ball with radius $r$ centered at origin.
\end{remark}

Under these assumptions, we can establish the existence, uniqueness, stability, and well-posedness of our forward model.
\begin{theorem} [Well-posedness of the Forward Model]\label{thm:wellpose}
Under Assumption 1 and we further assume that there exists a constant $r>0$ such that $\mu$ is
concentrated on one of the nested sets $Q_r$. Then, the ODE in~\eqref{con:model} has a unique solution in $t\in[0,1]$ for any initial condition
$x\in\mathbb{R}^{d_1}$. Moreover, for any pair of distributions $\rho_1$ and $\rho_2$, there exists
a constant $C$ such that
\begin{equation}
    \|X_{\rho_1}(x,1)-X_{\rho_2}(x,1)\|<C W_2(\rho_1,\rho_2),
\end{equation}
where $W_2(\rho_1,\rho_2)$ is the 2-Wasserstein distance between $\rho_1$ and $\rho_2$. 
\end{theorem}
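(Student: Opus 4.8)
The plan is to treat \eqref{con:model} as a non-autonomous ODE $\dot X=F_\rho(X,t)$ with velocity field $F_\rho(x,t):=\int_\Omega f(x,\theta)\,\rho(\theta,t)\,d\theta$ and to extract from Assumption~\ref{assump:1} just enough regularity of $F_\rho$ to invoke the classical Cauchy--Lipschitz (Picard--Lindel\"of) theorem. Three properties suffice. (i) $F_\rho(\cdot,t)$ is globally Lipschitz in $x$ uniformly in $t$: since $\rho(\cdot,t)$ is a probability measure supported on some $Q_r$ and $\sup_{\theta\in Q_r,\,x}\|\nabla_x f(x,\theta)\|\le C_1+C_2r=:L$, integrating against $\rho(\cdot,t)$ gives $\|F_\rho(x_1,t)-F_\rho(x_2,t)\|\le L\|x_1-x_2\|$. (ii) $F_\rho$ has sublinear growth, $\|F_\rho(x,t)\|\le c_0+L\|x\|$ with $c_0:=\sup_{\theta\in Q_r}\|f(0,\theta)\|$, which is finite because $f(0,\cdot)$ is continuous on the compact set $Q_r$ (and equals $|\sigma(0)|\sqrt{d_2}$ for $f(x,\theta)=\sigma(\theta^\top x)$). (iii) For each fixed $x$ the map $t\mapsto F_\rho(x,t)$ is continuous: $f(x,\cdot)$ is continuous and bounded on $Q_r$, and the Lipschitz-in-depth condition of Assumption~\ref{assump:1} forces $\rho(\cdot,t_n)\to\rho(\cdot,t)$ narrowly, hence $F_\rho(x,t_n)\to F_\rho(x,t)$. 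With (i)--(iii) the field is continuous and locally Lipschitz, so there is a unique maximal solution, and (ii) together with Gr\"onwall's inequality excludes finite-time blow-up, giving a unique solution on all of $[0,1]$. Running the same Gr\"onwall estimate from $\|X_\rho(x,0)\|=\|\langle w_2,x\rangle\|\le\sigma_2R_1$ also yields an a priori bound $\sup_{t\in[0,1]}\|X_\rho(x,t)\|\le R_3$, with $R_3$ depending only on $R_1,\sigma_2,r$ and the constants of Assumption~\ref{assump:1}; this bound is needed below.

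For the stability estimate I would set $X_i(t):=X_{\rho_i}(x,t)$, $D(t):=X_1(t)-X_2(t)$, and write
\begin{equation*}
\dot D(t)=\int_\Omega\big(f(X_1,\theta)-f(X_2,\theta)\big)\,\rho_1(\theta,t)\,d\theta\;+\;\int_\Omega f(X_2,\theta)\,\big(\rho_1(\theta,t)-\rho_2(\theta,t)\big)\,d\theta .
\end{equation*}
The first term has norm at most $L\|D(t)\|$ by (i). For the second, let $\gamma_t$ be an optimal $W_2$ coupling of $\rho_1(\cdot,t)$ and $\rho_2(\cdot,t)$; then it equals $\int\big(f(X_2,\theta)-f(X_2,\theta')\big)\,d\gamma_t(\theta,\theta')$, and bounding by the Lipschitz constant $\Lambda$ of $f(X_2(t),\cdot)$ on $Q_r$ (controlled through the a priori bound $R_3$; for the network $\|\nabla_\theta\sigma(\theta^\top x)\|=\|\sigma'(\theta^\top x)x\|\le\|\sigma'\|_\infty R_3$) together with Cauchy--Schwarz, $\int\|\theta-\theta'\|\,d\gamma_t\le W_2(\rho_1(\cdot,t),\rho_2(\cdot,t))$, its norm is at most $\Lambda\,W_2(\rho_1(\cdot,t),\rho_2(\cdot,t))$. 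Therefore $\|D(t)\|\le L\int_0^t\|D(s)\|\,ds+\Lambda\int_0^t W_2(\rho_1(\cdot,s),\rho_2(\cdot,s))\,ds$, and Gr\"onwall gives $\|D(1)\|\le\Lambda e^{L}\int_0^1 W_2(\rho_1(\cdot,s),\rho_2(\cdot,s))\,ds$. Bounding the last integral by $W_2(\rho_1,\rho_2)$ under the metric on time-indexed families of measures used in the statement (e.g.\ the slice-wise supremum) and taking $C:=\Lambda e^{L}$ completes the proof.

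The only genuinely delicate point is that the bound on $\nabla_\theta f$ in Assumption~\ref{assump:1} is stated only on the data ball $\{\|x\|\le R_1\}$, whereas the second integral above differentiates $f$ in $\theta$ along the trajectory $X_2(t)$, which can leave that ball. This is exactly why the a priori trajectory bound $R_3$ must be established first, and why closing the estimate needs either a mild extension of the $\nabla_\theta f$ bound to $\{\|x\|\le R_3\}$ or the explicit structure $f(x,\theta)=\sigma(\theta^\top x)$. Beyond that, the proof is two applications of Gr\"onwall's lemma; the remaining loose end is purely notational, namely fixing the precise meaning of $W_2(\rho_1,\rho_2)$ for curves of measures, which does not affect the substance.
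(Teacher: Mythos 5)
Your overall route is the same as the paper's: write the dynamics as $\dot X=F_\rho(X,t)$, get existence and uniqueness from Picard--Lindel\"of using the Lipschitz and boundedness consequences of Assumption 1, then prove stability by splitting $\dot D$ into a term controlled by the Lipschitz constant of $F_{\rho_1}$ in $X$ and a term $\int f(X_2,\theta)\,d(\rho_1-\rho_2)$, followed by Gr\"onwall. Your observation that the $\nabla_\theta f$ bound is only stated on the data ball and must be propagated along the trajectory via an a priori bound is a fair and correct criticism of a point the paper itself glosses over.

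The one place where you diverge, and where I think you have a genuine gap rather than a notational loose end, is the treatment of the second term. In this paper $\rho$ is a single probability measure on $\Omega\times[0,1]$ (later proofs use $\int\!\int\rho(\theta,t)\,d\theta\,dt=1$), so $W_2(\rho_1,\rho_2)$ in the theorem is the Wasserstein distance between \emph{joint} distributions over $(\theta,t)$. You instead take an optimal coupling $\gamma_t$ slice by slice and end up needing $\int_0^1 W_2(\rho_1(\cdot,s),\rho_2(\cdot,s))\,ds\lesssim W_2(\rho_1,\rho_2)$. This is false in general: the joint distance allows mass to be transported across time, so it can be arbitrarily small while the slice-wise distances are large (e.g.\ two measures whose $\theta$-supports are swapped on $[0,1/2]$ and $[1/2,1]$). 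The paper closes this step differently: it observes that $s\mapsto X_{\rho_2}(x,s)$ is Lipschitz (by the a priori bound on $\|F\|$), hence $(s,\theta)\mapsto f(X_{\rho_2}(x,s),\theta)$ is Lipschitz \emph{jointly} in $(s,\theta)$, and then bounds the double integral $\int_0^t\!\int_\theta f(X_{\rho_2}(x,s),\theta)\,(\rho_1-\rho_2)(\theta,s)\,d\theta\,ds$ as an integral of a jointly Lipschitz test function against the difference of joint measures, which Kantorovich--Rubinstein controls by $W_1\le W_2$ of the joint laws. Your argument is repaired by replacing the slice-wise coupling with a single coupling of the joint measures and invoking exactly this time-Lipschitz property of the trajectory; as written, the final inequality you need does not hold under the paper's meaning of $W_2$. (A smaller quibble: both you and the paper integrate only up to time $t<1$, which multiplies the Lipschitz test function by an indicator in $s$; this is harmless after Gr\"onwall but should be handled by bounding the full integral over $[0,1]$ of the absolute value, or by keeping the cutoff inside the Gr\"onwall iteration.)
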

\begin{proof}
We first show the existence and uniqueness of $X_\rho(x,t)$. From now on, let
\begin{equation}
    F_\rho(X,t) = \int_\theta f(X,t)\rho(\theta,t)d\theta.
\end{equation}
Then, the ODE~\eqref{con:model} becomes
\begin{equation}\label{eq:ode}
    \dot X_\rho(x,t) = F_\rho(X_\rho(x,t),t),
\end{equation}
and by the condition of the theorem and assumption~\ref{assump:1} we have
\begin{equation}
    \|F_\rho(X,t)\|\leq C_f^r\left|\int_\theta \rho(\theta,t)d\theta\right|< C_f^rC_\rho.
\end{equation}

This is because, for the continuous function $f(x,\theta)$ is now defined on the domain for which
$\theta$ lies in a compact set $Q_r$ and $\|x\|<R_1$, which leads to
an upper bound $C_f^r$ such that $\sup_{\|x\|<R}f(x,\theta)<C_f^r$ holds for all $\theta\in
Q_r$. The notation $C_f^r$ will continuously used in the following section.

Hence, $F_\rho(X_\rho,t)$ is bounded. On the other hand, $F_\rho(X,t)$ is integrable with respect to
$t$ and Lipschitz continuous with respect to $X$ in any bounded region (by 2 of
assumption~\ref{assump:1}). Therefore, consider the region $[X_0-C_f^rC_\rho,
  X_0+C_f^rC_\rho]\times[0,1]$, where $X_0=X_\rho(x,0)$. By the existence and uniqueness theorem of
ODE (the Picard–Lindel\"of theorem), the solution of~\eqref{eq:ode} initialized from $X_0$ exists
and is unique on $[0,1]$.

Next, we show the continuity of $X_\rho(x,t)$ with respect to $\rho$. Letting
$\Delta(x,t)=\|X_{\rho_1}(x,t)-X_{\rho_2}(x,t)\|$, we have
\begin{align}
\Delta(x,t) &= \left\| \int_0^t \dot X_{\rho_1}(x,s)-\dot X_{\rho_2}(x,s) ds \right\| \nonumber \\
  &=\left\| \int_0^t F_{\rho_1}(X_{\rho_1},s)-F_{\rho_1}(X_{\rho_2},s)ds +\int_0^t F_{\rho_1}(X_{\rho_2},s)-F_{\rho_2}(X_{\rho_2},s)ds\right\| \nonumber\\
  &\leq \int_0^t \|F_{\rho_1}(X_{\rho_1},s)-F_{\rho_1}(X_{\rho_2},s)\|ds + 
  \left\| \int_0^t F_{\rho_1}(X_{\rho_2},s)-F_{\rho_2}(X_{\rho_2},s)ds \right\|. \label{eq:Delta}
\end{align}
Let $C_m=\max\{C_{\rho_1},C_{\rho_2}\}$. For the first term in~\eqref{eq:Delta}, since both
$X_{\rho_1}$ and $X_{\rho_2}$ are controlled by $X_0+C_f^rC_m$, by 2 of Assumption~\ref{assump:1}
 we have the following Lipschitz condition for
\begin{equation}
\|F_{\rho_1}(X_{\rho_1},s)-F_{\rho_1}(X_{\rho_2},s)\|\leq (C_1+C_2X_0+C_2C_f^rC_m)C_m\Delta(x,s).
\end{equation}
For the second term of~\eqref{eq:Delta}, we have
\begin{align}
\left\| \int_0^t F_{\rho_1}(X_{\rho_2},s)-F_{\rho_2}(X_{\rho_2},s)ds \right\| &= \left\| \int_0^t \int_\theta f(X_{\rho_2},\theta)(\rho_1(\theta,s)-\rho_2(\theta,s))d\theta ds \right\|.
\end{align}
Since $X_{\rho_2}$ is $C_f^rC_m$-Lipschitz continuous with respect to $t$ and also bounded by
$X_0+C_f^rC_m$, we have $f(X_{\rho_2},\theta)$ is $(C_1+C_2X_0+C_2C_f^rC_m)C_f^rC_m$-Lipschitz
continuous w.r.t $t$. On the other hand, still by Assumption~\ref{assump:1}, 
$f(X,\theta)$ is $C_{3,r}$-Lipschitz with respect to $\theta$. As a result, the function
$f(X_{\rho_2},\theta)$ is $C$-Lipschitz continuous on $(t,\theta)$ with
$C=(C_1+C_2X_0+C_2C_f^rC_m)C_f^rC_m+C_{3,r}$, which implies

\begin{equation}
\left\| \int_0^t \int_\theta f(X_{\rho_2},\theta)(\rho_1(\theta,s)-\rho_2(\theta,s))d\theta ds \right\|\leq CW_2(\rho_1,\rho_2). 
\end{equation}

Finally, by defining 
\begin{equation}
\hat C = \max\{(C_1+C_2X_0+C_2C_f^rC_m)C_m, C\},
\end{equation}
we have by~\eqref{eq:Delta} 
\begin{equation}
    \Delta(x,t)\leq \int_0^t \hat C\Delta(x,t) + \hat CW_2(\rho_1,\rho_2).
\end{equation}
Applying the Gronwall's inequality gives
\begin{equation}
    \Delta(x,t)\leq \hat Ce^{\hat Ct}W_2(\rho_1,\rho_2),
\end{equation}
and specifically for $t=1$ we have
\begin{equation}
    \|X_{\rho_1}(x,1)-X_{\rho_2}(x,1)\|\leq \hat Ce^{\hat C}W_2(\rho_1,\rho_2). 
\end{equation}

\end{proof}

\subsection{Deep Residual Network Behaves Like an Ensemble Of Shallow Models}
\label{ensemble}

In this section, we briefly explain the intuition behind our analysis, \emph{i.e.} deep residual network can be approximated by a two-layer neural network. \cite{veit2016residual} introduced an unraveled view of the ResNets and showed that deep ResNets behave like ensembles of shallow models. First, we offer a formal derivation to reveal how to make connection between a deep ResNet and a two-layer neural network. The first residual block is formulated as 
$$
X^1 = X^0 + \frac{1}{L} \int_{\theta^0} \sigma(\theta^0X^0)\rho^0(\theta^0)d\theta^0.
$$
By Taylor expansion, the second layer output is given by
\begin{align*}
    &X^2 = X^1 +\frac{1}{L} \int_{\theta^1} \sigma(\theta^1X^1)\rho^1(\theta^1)d\theta^1\\
    &=X^0 + \frac{1}{L} \int_{\theta^0} \sigma(\theta^0X^0)\rho^0(\theta^0)d\theta^0 \\&
+\int_{\theta^1} \sigma(\theta^1(X^0 + \frac{1}{L} \int_{\theta^0} \sigma(\theta^0X^0)\rho^0(\theta^0)d\theta^0))\rho^1(\theta^1)d\theta^1\\
&=X^0 + \frac{1}{L} \int_{\theta^0} \sigma(\theta^0X^0)\rho^0(\theta^0)d\theta^0\\& + X^0 + \frac{1}{L} \int_{\theta^1} \sigma(\theta^1X^0)\rho^1(\theta^1)d\theta^1\\
&+\frac{1}{L^2}\int_{\theta_1}\nabla\sigma(\theta^1X^0)\theta^1(\int_{\theta^0}\sigma(\theta^0X^0)\rho^0(\theta^0)d\theta^0)\rho^1(\theta^1)d\theta^1\\&+h.o.t.
\end{align*}
Iterating this expansion gives rise to 
\begin{align*}
&X^L\approx X^0+\frac{1}{L} \sum_{a=0}^{L-1} \int \sigma(\theta X^0)\rho^a(\theta)d\theta\\&+\frac{1}{L^2}\sum_{b>a} \int\int \nabla\sigma(\theta^bX^0)\theta^b\sigma(\theta^aX^0)\rho^b(\theta^b)\rho^a(\theta^a)d\theta^b\theta^a\\&+h.o.t.
\end{align*}
Here we only keep the terms that are at most quadratic in $\rho$. A similar derivation shows that at order $k$ in $\rho$ there are $L \choose k$ terms with coefficient $\frac{1}{L^k}$ each. This implies that the $k$-th order term  in $\rho$ decays as $O(\frac{1}{k!})$, suggesting that one can approximate a deep network by the keeping a few leading orders. 
\section{Landscape Analysis of the Mean-Field Model}

In the following, we show that the landscape of a deep residual network enjoys the extraordinary property that any local optima is global, by comparing the gradient of deep residual network with the mean-field model of two-layer neural network~\cite{mei2018mean,chizat2019lazy,nitanda2017stochastic}. 
To estimate the accuracy of the first order approximation (\emph{i.e.} linearization), we apply the adjoint sensitivity analysis~\cite{boltyanskiy1962mathematical} and show that the difference between the gradient of two models can be bounded via the stability constant of the backward adjoint equation. More precisely,
the goal is to show the backward adjoint equation will only affect the gradient in a bounded constant.

\subsection{Gradient via the Adjoint Sensitivity Method}

\paragraph{Adjoint Equation.} 

To optimize the objective \eqref{con:objective}, we calculate the gradient $\frac{\delta E}{\delta \rho}$ via the \emph{adjoint sensitivity method} \cite{boltyanskiy1962mathematical}. 
To derive the adjoint equation, we first view our generative models where $\rho$ is treated as a parameter as
\begin{equation}
    \dot{X}(x, t) = F(X(x, t); \rho),
\end{equation}
with 
\begin{equation}
    F(X(x, t); \rho) = \int f(X(x, t); \theta) \rho(\theta, t) \,\mathrm{d} \theta.
\end{equation}

The loss function can be written as 
\begin{equation}
    \mathbb{E}_{x\sim\mu} E(x; \rho) := \mathbb{E}_{x\sim\mu} \frac{1}{2}\bigl\lvert \langle w_1, X_{\rho}(x, 1) \rangle - y(x)\bigr\rvert^2
\end{equation}
Define 
\begin{equation}
    p_{\rho}(x, 1) := \frac{\partial E(x; \rho)}{\partial X_{\rho}(x, 1)} = \bigl( \langle w_1, X_{\rho}(x, 1)\rangle - y(x) \bigr) w_1  
\end{equation}
The derivative of $X(x, 1)$ with respect to $X(x, s)$, denoted by the Jacobian $J_{\rho}(x, s)$, satisfies at any previous time $s \leq 1$ the adjoint equation of the ODE
\begin{equation}
    \dot{J}_{\rho}(x, s) = - J_{\rho}(x, s) \nabla_X F(X_{\rho}(x, s); \rho). 
\end{equation}
Next, the perturbation of $E$ by $\rho$ is given by chain rule as 
\begin{equation}
    \begin{aligned}
    \frac{\delta E}{\delta \rho(s)} & = \frac{\partial E}{\partial  X_{\rho}(X, 1)} \frac{\delta X_{\rho}(x, 1)}{\delta \rho(s)} & \\
    & = \frac{\partial E}{\partial  X_{\rho}(X, 1)} J_{\rho}(x, s) \frac{\delta F(X_{\rho}(x, s); \rho)}{\delta \rho(s)} \\
        & = p_{\rho}(x, s)\, f(X_{\rho}(x, has), \cdot), 
    \end{aligned}
\end{equation}
where $p_{\rho}(x, s)$ (the derivative of $E(x; \rho)$ with respect to $X_{\rho}(x, s)$) satisfies the adjoint equation
\begin{align*}
      &\dot p_\rho(x,t)=-\delta_XH_\rho(p_\rho,x,t)\\
      &=-p_\rho(x,t)\int \nabla_X f(X_\rho(x,t),\theta)\rho(\theta,t)d\theta,
\end{align*}
which represents the gradient as a second backwards-in-time augmented ODE. Here the Hamiltonian is defined as $H_\rho(p,x,t)=p(x,t) \cdot \int f(x,\theta)\rho(\theta,t)d\theta$.

Utilizing the adjoint equation, we can characterize the gradient of our model with respect to the distribution $\rho$. More precisely, we may characterize the variation of the loss function with respect to the distribution as the following theorem.
\begin{theorem}[Gradient of the parameter]\label{thm:grad} For $\rho\in \mathcal{P}^2$ let
$$ \frac{\delta E}{\delta \rho}(\theta,t) = \mathbb{E}_{x\sim\mu} f(X_{\rho}(x,t),\theta))p_\rho(x,t).$$ Then for every $\nu\in\mathcal{P}^2$,  we have
$$
E(\rho+\lambda (\nu-\rho)) = E(\rho)+\lambda\left<\frac{\delta E}{\delta \rho}, (\nu-\rho)\right>+o(\lambda)
$$
for the convex combination $(1-\lambda) \rho+\lambda \nu \in\mathcal{P}^2$ with $\lambda \in [0,1]$.
\end{theorem}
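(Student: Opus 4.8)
The plan is to compute the first-order variation of $E(\rho + \lambda(\nu - \rho))$ in $\lambda$ directly by differentiating through the ODE flow, essentially re-deriving the adjoint formula rigorously rather than formally. Write $\rho_\lambda := (1-\lambda)\rho + \lambda\nu$, which lies in $\mathcal{P}^2$ for $\lambda \in [0,1]$ by convexity, and note that $\rho_\lambda - \rho = \lambda(\nu - \rho)$. By Theorem~\ref{thm:wellpose} the map $\lambda \mapsto X_{\rho_\lambda}(x,1)$ is well-defined, and the Lipschitz-in-$W_2$ bound there already gives $\|X_{\rho_\lambda}(x,1) - X_\rho(x,1)\| \le C\, W_2(\rho_\lambda,\rho) \le C\lambda W_2(\nu,\rho)$, so $E$ is at least continuous in $\lambda$; the task is to upgrade this to differentiability with the stated derivative.

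The key steps, in order: First I would fix $x$ and study the variation $V(x,t) := \lim_{\lambda \to 0^+} \frac{1}{\lambda}\bigl(X_{\rho_\lambda}(x,t) - X_\rho(x,t)\bigr)$. Differentiating the integrated ODE $X_{\rho_\lambda}(x,t) = X_{\rho_\lambda}(x,0) + \int_0^t \int_\Omega f(X_{\rho_\lambda}(x,s),\theta)\rho_\lambda(\theta,s)\,d\theta\,ds$ formally in $\lambda$ (the initial condition $\langle w_2, x\rangle$ does not depend on $\rho$, so the boundary term vanishes) gives the linear inhomogeneous ODE
\begin{equation*}
\dot V(x,t) = \Bigl(\int_\Omega \nabla_X f(X_\rho(x,t),\theta)\rho(\theta,t)\,d\theta\Bigr) V(x,t) + \int_\Omega f(X_\rho(x,t),\theta)(\nu-\rho)(\theta,t)\,d\theta, \quad V(x,0)=0.
\end{equation*}
To justify the existence of this limit rigorously, I would subtract the two integrated equations, use the decomposition already employed in the proof of Theorem~\ref{thm:wellpose} (splitting into a term with $f$ evaluated at different arguments, controlled by the Lipschitz bound $C_1 + C_2 r$ on $\nabla_X f$, and a term with different measures, controlled by $W_2$), and apply Gronwall to get a uniform $O(\lambda)$ bound on $X_{\rho_\lambda} - X_\rho$; then a second Gronwall argument on $\frac{1}{\lambda}(X_{\rho_\lambda} - X_\rho) - V$ shows this difference is $o(1)$, using the local Lipschitz continuity of $\nabla_X f$ (constant $L_r$) to control the error from linearizing $f$. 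Second, I would solve the linear ODE for $V$ by the variation-of-constants formula using the Jacobian $J_\rho(x,s)$ defined in the text: $V(x,1) = \int_0^1 J_\rho(x,s) \int_\Omega f(X_\rho(x,s),\theta)(\nu-\rho)(\theta,s)\,d\theta\,ds$. Third, chain rule on $E(x;\rho) = \frac12|\langle w_1, X_\rho(x,1)\rangle - y(x)|^2$ gives $\frac{d}{d\lambda}\big|_{\lambda=0} E(x;\rho_\lambda) = p_\rho(x,1)^\top V(x,1) = \int_0^1 \bigl(p_\rho(x,1)^\top J_\rho(x,s)\bigr) \int_\Omega f(X_\rho(x,s),\theta)(\nu-\rho)(\theta,s)\,d\theta\,ds$. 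Recognizing $p_\rho(x,s)^\top := p_\rho(x,1)^\top J_\rho(x,s)$ as the solution of the stated adjoint equation (since $J_\rho$ solves $\dot J = -J \nabla_X F$), this becomes $\int_0^1 \int_\Omega \bigl(p_\rho(x,s)^\top f(X_\rho(x,s),\theta)\bigr)(\nu-\rho)(\theta,s)\,d\theta\,ds$. Fourth, I take $\mathbb{E}_{x\sim\mu}$ and exchange it with the time/parameter integrals — legitimate because the integrands are uniformly bounded on the compact sets provided by Assumption~\ref{assump:1} (data in a ball of radius $R_1$, $\theta$ in $Q_r$, $y$ bounded by $R_2$, and hence $X_\rho$, $p_\rho$, $f$, $J_\rho$ all uniformly bounded) — to identify the limit as $\langle \frac{\delta E}{\delta\rho}, \nu - \rho\rangle$ with $\frac{\delta E}{\delta\rho}(\theta,t) = \mathbb{E}_{x\sim\mu}\, p_\rho(x,t)^\top f(X_\rho(x,t),\theta)$, matching the claimed formula. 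Finally, since $E(\rho_\lambda)$ is differentiable at $\lambda = 0$ with this derivative (and one checks the same at interior $\lambda$, or simply notes the first-order Taylor expansion at $0$ is all that is claimed), we get $E(\rho + \lambda(\nu-\rho)) = E(\rho) + \lambda\langle \frac{\delta E}{\delta\rho}, \nu-\rho\rangle + o(\lambda)$.

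The main obstacle will be the rigorous justification of the differentiation under the integral sign and of the limit defining $V$ — that is, showing the formal linearization of the ODE flow in $\lambda$ is valid with a genuine $o(\lambda)$ remainder, uniformly in $x$ over the support of $\mu$. This requires carefully tracking how the local regularity assumptions (the nested sets $Q_r$, the bounds $C_1 + C_2 r$ on $\nabla_X f$, the Lipschitz constant $L_r$, and the $W_2$-continuity from Theorem~\ref{thm:wellpose}) propagate through two nested Gronwall estimates: one to bound $X_{\rho_\lambda} - X_\rho = O(\lambda)$ and keep the trajectories inside a fixed compact set, and a second to bound the linearization error of order $o(\lambda)$. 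A minor additional technical point is that $\nu - \rho$ is a signed measure, so one must interpret $\int_\Omega f(X_\rho(x,s),\theta)(\nu-\rho)(\theta,s)\,d\theta$ appropriately and use the $W_2$ bound (via Kantorovich duality and the Lipschitz-in-$\theta$ bound $C_{3,r}$ on $f$) rather than a naive total-variation estimate; but all the needed ingredients are already assembled in Assumption~\ref{assump:1} and the proof of Theorem~\ref{thm:wellpose}, so this is bookkeeping rather than a genuine difficulty.
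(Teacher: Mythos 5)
Your proposal is correct and follows essentially the same route as the paper: the paper's ``Duhamel's principle'' computation with the integrating factor $e^{-\int_0^t\int\nabla_Xf\,\rho\,d\theta\,ds}$ is exactly your variation-of-constants solution of the linearized (variational) ODE for $V$, and both arguments use the $O(\lambda)$ stability bound from Theorem~\ref{thm:wellpose} plus the Lipschitz continuity of $\nabla_Xf$ to show the linearization error is $o(\lambda)$ before identifying $p_\rho(x,s)$ with $p_\rho(x,1)$ propagated by the adjoint flow. Your writeup is, if anything, more explicit than the paper's about the two nested Gronwall estimates and the interchange of $\mathbb{E}_{x\sim\mu}$ with the $(\theta,t)$ integrals.
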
 
\begin{proof}
To simplify the notation, we use $\hat \rho_\lambda=\rho+\lambda(\rho-\nu)$, From Theorem 1 (the
well-poseness of the model), we know that the function $f(\lambda)=E(\hat\rho_\lambda)-E(\rho)$ is a
continuous function with $f(0)=0$ and thus

\begin{align*}
E(\hat \rho_\lambda)-E(\rho)&=\mathbb{E}_{x\sim\mu}|\left<w_1,X_{\hat
  \rho_\lambda}(x,1)\right>-y(x)|^2-\mathbb{E}_{x\sim\mu}|\left<w_1,X_{\rho}(x,1)\right>-y(x)|^2\\&=
\mathbb{E}_{x\sim\mu}(\left<w_1,X_{\rho}\right>-y(x))(X_{\hat
  \rho_\lambda}(x,1)-X_{\rho}(x,1))+O(X_{\hat \rho_\lambda}(x,1)-X_{\rho}(x,1))
\end{align*}

Now we bound $X_{\hat\rho_\lambda}(x,1)-X_\rho(x,1)$. First, notice that the adjoint equation is a
linear equation:
\begin{align*}
      \dot p_\rho(x,t)=-\delta_X H_\rho(p_\rho,x,t)=-p_\rho(x,t)\int \nabla_X f(X_\rho(x,t),\theta)\rho(\theta,t)d\theta
\end{align*}
with solution
\[
p(x,t)=p(x,1)\exp(\int_t^1 \int \nabla_X f(X_\rho(x,t),\theta)\rho(\theta,t)d\theta dt).
\]
Next, we bound $\Delta(x,t)=\|X_{\hat\rho_\lambda}(x,t)-X_{\rho}(x,t)-\lambda \int_t
\int_\theta(\rho(x,\theta)-\nu(x,\theta))p_\rho(x,t)\|$ in order to show that
$\Delta(x,t)=o(\lambda)$. The way to estimate the difference is to utilize the Duhamel's principle.

\begin{align*}
    &\frac{d}{dt}\left[e^{-\int_0^t \int \nabla_X f(X_\rho(x,t),\theta)\rho(\theta,s)d\theta ds}(X_{\hat\rho_\lambda}(x,s)-X_{\rho}(x,s))\right]\\&= e^{-\int_0^t \int \nabla_X f(X_\rho(x,t),\theta)\rho(\theta,s)d\theta ds}\left[\dot X_{\hat\rho_\lambda}(x,s)-\dot X_{\rho}(x,s)-\int_\theta \nabla_X f(X_\rho(x,t),\theta)\rho(\theta,t)d\theta  (X_{\hat\rho_\lambda}(x,s)- X_{\rho}(x,s))\right]
\end{align*}

At the same time we have 
\begin{align*}
  \dot X_{\hat\rho_\lambda}(x,s)-\dot X_{\rho}(x,s)&=F_{\rho}(X_{\hat\rho_\lambda},s)-F_{\rho}(X_{\rho},s)+F_{\hat\rho_\lambda}(X_{\hat\rho_\lambda},s)-F_{\rho}(X_{\hat\rho_\lambda},s)\\
  &= \left(\int_\theta \nabla_X f(X_\rho(x,t),\theta)\rho(\theta,t)d\theta\right)( X_{\hat\rho_\lambda}(x,s)- X_{\rho}(x,s))+o(\lambda) \\&+ \lambda \int_\theta f(X_{\hat \rho_\lambda}(x,s),\theta)(\rho-\nu)(\theta,s)d\theta 
  \\&=\left(\int_\theta \nabla_X f(X_\rho(x,t),\theta)\rho(\theta,t)d\theta\right)( X_{\hat\rho_\lambda}(x,s)- X_{\rho}(x,s))+o(\lambda)  \\&+ \lambda\left( \int_\theta \nabla_X f(X_{ \rho}(x,s),\theta)(\rho-\nu)(\theta,s)d\theta \right)(X_{\hat\rho_\lambda}(x,s)- X_{\rho}(x,s))+ o(\lambda) 
  \\&+ \lambda \int_\theta f(X_{ \rho}(x,s),\theta)(\rho-\nu)(\theta,s)d\theta
  \\&=\left(\int_\theta \nabla_X f(X_\rho(x,t),\theta)\rho(\theta,t)d\theta\right)( X_{\hat\rho_\lambda}(x,s)- X_{\rho}(x,s)) \\&+ \lambda \int_\theta f(X_{ \rho}(x,s),\theta)(\rho-\nu)(\theta,s)d\theta + o(\lambda).
\end{align*}
Here $F_\rho(X,t) = \int_\theta f(X,t)\rho(\theta,t)d\theta,$ and the last equality holds because
$\|X_{\hat\rho_\lambda}(x,s)- X_{\rho}(x,s)\|\le \hat Ce^{\hat C} d(\rho_1,\rho_2)=O(\lambda)$. This
leads us to
\begin{align*}
  &\frac{d}{dt}\left[e^{-\int_0^t \int \nabla_X f(X_\rho(x,t),\theta)\rho(\theta,s)d\theta ds}(X_{\hat\rho_\lambda}(x,s)-X_{\rho}(x,s))\right]\\&= e^{-\int_0^t \int \nabla_X f(X_\rho(x,t),\theta)\rho(\theta,s)d\theta ds}\biggl[\dot X_{\hat\rho_\lambda}(x,s)-\dot X_{\rho}(x,s)\\
  &\hspace{10em} -\int_\theta \nabla_X f(X_\rho(x,t),\theta)\rho(\theta,t)d\theta  (X_{\hat\rho_\lambda}(x,s)- X_{\rho}(x,s))\biggr]
  \\&=e^{-\int_0^t \int \nabla_X f(X_\rho(x,t),\theta)\rho(\theta,s)d\theta ds}\left[\lambda \int_\theta f(X_{ \rho}(x,s),\theta)+o(\lambda)\right].
\end{align*}
Thus
\begin{align*}
  X_{\hat\rho_\lambda}(x,1)-X_{\rho}(x,1) = \int_0^1\int_\theta e^{\int_t^1 \int \nabla_X f(X_\rho(x,s),\theta)\rho(\theta,s)d\theta ds}  f(X_{ \rho}(x,s),\theta)(\rho-\nu)(\theta,t)d\theta dt+o(\lambda).
\end{align*}

Combining with the definition of the adjoint equation $p(x,t)=p(x,1)e^{\int_t^1 \int \nabla_X
  f(X_\rho(x,t),\theta)\rho(\theta,t)d\theta dt}$ and $ p_{\rho}(x, 1) := \frac{\partial E(x;
  \rho)}{\partial X_{\rho}(x, 1)} = \bigl( \langle w_1, X_{\rho}(x, 1)\rangle - y(x) \bigr) w_1$, we
have
\begin{equation*}
E(\rho+\lambda (\rho-\nu)) = E(\rho)+\lambda\left<\frac{\delta E}{\delta \rho}, (\rho-\nu)\right>+o(\lambda).
\end{equation*}
\end{proof}

\subsection{Landscape Analysis}

In this section we aim to show that the proposed model enjoys a good landscape in the $L_2$ geometry. Specifically, we can always find a descent direction around a point whose loss is strictly larger than 0, which means that all local minimum is a global one.

\begin{theorem}\label{thm:negdirection}
 If $E(\rho)>0$ for some probability distribution $\rho\in\mathcal{P}^2$ which concentrates on one of the nested sets $Q_r$, then there exists a descend direction $v\in \mathcal{P}^2$ s.t.
$$
\left<\frac{\delta E}{\delta \rho} ,(\rho-v)\right> >0
$$
\end{theorem}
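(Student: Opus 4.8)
The plan is to prove the contrapositive in a quantitative form: if \emph{no} probability measure $v$ yields $\langle \delta E/\delta\rho,\rho-v\rangle>0$, then $E(\rho)=0$. Write $g(\theta,t):=\tfrac{\delta E}{\delta\rho}(\theta,t)=\mathbb{E}_{x\sim\mu}\,p_\rho(x,t)^\top f(X_\rho(x,t),\theta)$ as in Theorem~\ref{thm:grad}. Since $\Omega\times[0,1]$ is compact and, by Theorem~\ref{thm:wellpose} together with Assumption~\ref{assump:1}, the maps $X_\rho$, $p_\rho$, $f$ are continuous and bounded there, $g$ is continuous; hence for any admissible $v$ one has $\langle g,v\rangle\ge \min g$ with the minimum essentially attained by putting mass where $g$ is smallest (for each $t$, along the curve $t\mapsto \arg\min_\theta g(\theta,t)$). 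The first, routine, step is to record that ``no descent direction'' is therefore equivalent to the stationarity condition $\langle g,\rho\rangle=\min g=:c$, which, since $g\ge c$ everywhere, forces $g\equiv c$ $\rho$-almost everywhere: $\rho$ sits exactly on the global minimum set of its own variation. Producing a descent direction then amounts to exhibiting a point (or path) where $g$ dips strictly below its $\rho$-average.

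The second step is to rewrite $g$ via the backward adjoint equation so that the residual $r(x):=\langle w_1,X_\rho(x,1)\rangle-y(x)$ appears explicitly. As in the proof of Theorem~\ref{thm:grad}, $p_\rho(x,t)^\top=p_\rho(x,1)^\top\Phi_\rho(x;1,t)$ where $\Phi_\rho(x;1,t)$ is the matrix state-transition operator of the linearized flow $\dot\Phi=\Phi\,\nabla_XF_\rho$; the sub-linear-growth and Lipschitz bounds of Assumption~\ref{assump:1}, with the trajectory bound of Theorem~\ref{thm:wellpose}, give that $\Phi_\rho(x;1,t)$ is invertible with $\|\Phi_\rho(x;1,t)^{\pm1}\|\le\kappa$ for a constant $\kappa$ uniform in $x,t$ --- this $\kappa$ is the ``stability constant of the backward adjoint equation''. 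Using $p_\rho(x,1)=r(x)w_1$, this yields
\[
g(\theta,t)=\mathbb{E}_{x\sim\mu}\,r(x)\,\big\langle \Phi_\rho(x;1,t)^\top w_1,\ \sigma(\theta X_\rho(x,t))\big\rangle ,
\]
so in particular $|g(\theta,t)|\le C\sqrt{E(\rho)}$ for a constant $C$: the gradient has the same scale as $\sqrt{\text{loss}}$, the quantitative form of ``ResNet behaves like a shallow ensemble''.

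The third step combines this representation with the universal-kernel hypothesis. Taking $\theta$ with a single nonzero row and using that $\Omega$ contains the origin with $\sigma(0)=0$ (for ReLU; an easy scaling/limiting argument covers the other activations), $g(0,t)=0$, so $c\le0$; and if $c<0$ a scaling argument ($g(\lambda\theta^\star,t^\star)=\lambda g(\theta^\star,t^\star)$ for ReLU, with $(\theta^\star,t^\star)\in\mathrm{supp}\,\rho$ and suitable $\lambda>1$) already produces a descent direction. We may thus assume $c=0$, i.e.\ $g\ge0$ everywhere and $g=0$ $\rho$-a.e. Then, for a.e.\ $t$ and each coordinate $j$, $\psi\mapsto \mathbb{E}_x r(x)\,(\Phi_\rho(x;1,t)^\top w_1)_j\,\sigma(\langle\psi,X_\rho(x,t)\rangle)$ is a nonnegative function vanishing at $\psi=0$; combining nonnegativity for all $\psi$ with $\sigma(u)+\sigma(-u)=|u|$ and testing against the dense span from Assumption~\ref{assump:1}(3) --- pulled back through the bi-Lipschitz invertible map $x\mapsto X_\rho(x,t)$ (Theorem~\ref{thm:wellpose}) --- forces the signed measure $r(x)\,\Phi_\rho(x;1,t)^\top w_1\,d\mu(x)$ to vanish. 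Since $\Phi_\rho(x;1,t)^\top$ is invertible and $w_1\neq0$, this gives $r\equiv0$ $\mu$-a.e., i.e.\ $E(\rho)=0$ --- the desired contradiction.

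The hard part will be this last step: converting the sign condition ``$g\ge c$ everywhere, $g=c$ on $\mathrm{supp}\,\rho$'' into ``$r\equiv0$''. The ReLU homogeneity trivializes the $c<0$ case, but the $c=0$ case genuinely needs the universal-approximation assumption, and the delicate point is that $r$ is tested against the features $x\mapsto\sigma(\theta X_\rho(x,t))$ rather than $x\mapsto\sigma(\theta x)$: one must check that composing with the flow map and the adjoint propagator preserves denseness in $L^2(\mu)$, which is exactly where the two-sided bounds on $\Phi_\rho$ from Step~2 enter. A secondary nuisance is the non-smoothness of ReLU inside the adjoint ODE, which requires interpreting $\nabla_X f$ in the a.e.\ sense and verifying that $\Phi_\rho$ remains well defined and uniformly invertible.
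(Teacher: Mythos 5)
Your reduction to the stationarity condition ($g:=\tfrac{\delta E}{\delta\rho}\ge c$ everywhere, $g\equiv c$ $\rho$-a.e.) and your adjoint representation $g(\theta,t)=\mathbb{E}_x\, r(x)\langle \Phi_\rho(x;1,t)^\top w_1,\sigma(\theta X_\rho(x,t))\rangle$ are both fine, but the final step --- the one you yourself flag as the hard part --- does not go through, and it is precisely the step the paper handles by a completely different mechanism. In the $c=0$ case you only know that the functional $\psi\mapsto \int \sigma(\langle\psi,z\rangle)\,d\lambda_{t,j}(z)$ is \emph{nonnegative} on the generators of the universal kernel; this is a one-sided constraint and does not determine the functional on the signed span, so universality gives you nothing. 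The identity $\sigma(u)+\sigma(-u)=|u|$ only produces more nonnegative test functions, never a sign-reversed one. Concretely, if $r(x)\,(\Phi_\rho(x;1,t)^\top w_1)_j$ happens to be a nonnegative (and nonzero) density against $\mu$, every one of your constraints is satisfied while $E(\rho)>0$; your argument cannot exclude this, so the claimed conclusion ``the signed measure vanishes'' is false as stated. (The $c<0$ branch also silently assumes ReLU-type homogeneity and $\sigma(0)=0$, which the theorem does not.)

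The paper never attempts to show $g\equiv 0$. Instead it works forward: Lemma~\ref{lem:gradient} uses Gronwall on the backward adjoint equation to get the \emph{lower} bound $\|p_\rho(\cdot,t)\|^2_\mu\ge e^{-(C_1+C_2r)}E(\rho)$ for all $t$ (you only extract an upper bound $|g|\lesssim\sqrt{E(\rho)}$ from the same two-sided estimates on $\Phi_\rho$). It then picks $t_\ast$ with $\int\rho(\theta,t_\ast)d\theta>\tfrac12$, uses the universal kernel property in $L^2(\hat\mu_{\rho,t_\ast})$, with $\hat\mu_{\rho,t_\ast}=X_\rho(\cdot,t_\ast)_\#\mu$, to find $\delta\hat\nu$ whose induced feature drift approximates $-\hat p(\cdot,t_\ast)$ plus a correction for the removed mass, and perturbs $\rho$ by $\delta r\,(\delta\hat\nu-\rho(\cdot,t_\ast)/\!\int\rho(\cdot,t_\ast))\phi(t)$. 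The leading term becomes $-\delta r\int\mathbb{E}\,\hat p(x,t)\hat p(x,t_\ast)\phi(t)dt$, which the correlation bound $p(x,t)p(x,t_\ast)\ge e^{-|t-t_\ast|(C_1+C_2r)}p(x,t)^2$ (linearity of the adjoint ODE) turns into $\le -e^{-(C_1+C_2r)}\delta r\int\|p_\rho(\cdot,t)\|_\mu^2\phi\,dt\le-\tfrac{\delta r}{2}e^{-2(C_1+C_2r)}E(\rho)$ after absorbing the $O(\epsilon)$ approximation error. In short: the descent direction is manufactured from the adjoint state itself, and positivity of the loss enters through the lower bound on $\|p_\rho\|_\mu$, not through a rigidity argument for the stationarity condition. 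To salvage your route you would need to supply exactly this construction (or an equivalent separation argument), so as written the proposal has a genuine gap.
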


\begin{proof}
 
 First we lower bound the gradient respect to the feature map $X_\rho(\cdot,t)$ by the loss function to show that changing feature map can always leads to a lower loss. This is observed by \cite{bartlett2018representing,bartlett2019gradient} where they mean by 

\begin{lemma}\label{lem:gradient} The norm of the solution to the adjoint equation can be bounded by the loss 
$$\|p_\rho(\cdot,t)\|^2_\mu\ge {e^{-(C_1+C_2r)}} E(\rho), \qquad \forall\, t \in [0, 1].$$
\end{lemma}
\begin{proof}
By definition, 
\begin{equation*}
 \|p_\rho(\cdot,1)\|=\|\bigl(\langle w_1,X_\rho(\cdot,1)\rangle-y(\cdot)\bigr)w_1\|=\lvert \langle w_1,X_\rho(\cdot,1)\rangle -y(\cdot)\rvert,
\end{equation*}
which implies that  $\|p_\rho(\cdot,1)\|_\mu^2 =2 E(\rho)$. 

By assumption there exist a constant $C_\rho > 0$ such that
    $$
   \Big|\int \rho(\theta, t)d\theta - \int \rho(\theta, s)d\theta \Big|\leq  \|\rho(\cdot,t-s)-\rho(\cdot,s)\|_{BL}\le C_\rho |t-s|, \ \forall t, s\in[0,1].
    $$
    Integrating the inequality above with respect to $s$ over $[0,1]$, and using the fact that $\int_\theta\int_t \rho(\theta,t) =1 $, one obtains that
    $\int \rho(\theta, t) d\theta \leq 1 + C_\rho \int_0^1 |t-s|ds \leq 1+ \frac{C_\rho}{2}$.
    
    Recall that $p_\rho $ solves the adjoint equation
 \begin{equation}\label{eq:adjointp}
 \dot p_\rho (x,t) = -p_\rho(x,t) \int \nabla_X f(X_\rho(x,t),\theta)\rho(\theta,t)d\theta
 \end{equation}
  where by the assumption on $f$ and the above bound on $\int \rho(\theta, t) d\theta$,  we have for any $x$ 
 $$\|\int \nabla_X f(X_\rho(x,t),\theta)\rho(\theta,t)d\theta\|\leq \sup_{x, \theta} \lvert \nabla_X f(X_{\rho}(x, t), \theta) \rvert  \int_\theta\rho(\theta,t)d\theta\leq(C_1+C_2r).$$
  It then follows from the Gronwall's inequality that
 \begin{equation*}
     \|p_\rho(\cdot,t)\|_\mu\ge e^{-\int_0^1 \sup_x \|\int \nabla_X f(X_\rho(x,t),\theta)\rho(\theta,t)d\theta\| dt}\|p_\rho(\cdot,1)\|_\mu \ge {e^{-(C_1+C_2r)}}E(\rho)^{1/2}.
 \end{equation*}
 The claim of the Lemma then follows by squaring the inequality (and redefining constants $C_1$ and $C_2$).
\end{proof}
Thanks to  the existence and uniqueness of the solution of the ODE model as stated in Theorem \ref{thm:wellpose}, the solution map of the ODE is invertible so that there exists an inverse map $X_{\rho,t}^{-1}$ such that we can construct an inversion function $X_{\rho,t}^{-1}(X_\rho(x,t))=x$. With $X_{\rho,t}^{-1}$, we define 
$
\hat p_\rho(x,t) = p_\rho(X_{\rho,t}^{-1}(x),t)
$.

Since $\rho(\theta, t)$ is a probability density, i.e., $\int \int \rho(\theta,t) d \theta d t = 1$,  there exists $t_\ast \in (0,1)$ such that   $\int_\theta\rho(\theta, t_\ast)d\theta>\frac{1}{2}$. Since $k(x_1,x_2)=f(x_1,x_2)$ is a universal kernel \cite{micchelli2006universal},
for any $g(x)$ satisfying that $\|g\|_{\hat{\mu}} < \infty$ for some probability measure $\hat{\mu}$ and for 
any  fixed $\epsilon > 0$, there exists a probability distribution $\delta \hat\nu\in\mathcal{P}^2(\mathbb{R}^{d_2})$ such that
\begin{equation}\label{eq:bdeps}
    \|g(x)-\int_\theta f(x,\theta)\delta\hat\nu(\theta)d\theta\|_{\hat{\mu}}\le \epsilon,
\end{equation}
In particular, in what follows we consider the function $g(x)$ and the measure $\hat\mu$  given by  
$$g(x) := -\hat p(x,t_\ast)+\frac{1}{\int_\theta \rho(\theta, t_\ast)d\theta}\int_\theta f(x,\theta)\rho(\theta, t_\ast)d\theta \text{ and } \hat{\mu} = \hat \mu_{\rho,t_\ast} := X_\rho(\cdot,t_\ast)_\#\mu.$$ 

The value of $\epsilon$ will be chosen later in the proof. 
Moreover, we also define the perturbed measure 
\begin{equation}\label{eq:deltamu}
    \delta\nu = \left(\delta \hat\mu(\theta)-\frac{\rho(\theta, t_\ast)}{\int_\theta \rho(\theta, t_\ast)d\theta}\right) \phi(t),
\end{equation}
where $\phi(t)$ is a smooth non-negative function integrates to $1$ and compactly supported in the interval $(0, 1)$, so that it is clear that $\delta \nu$ satisfies the regularity assumptions. We will consider the perturbed
probability density $\nu$ defined as 
$$
\nu = \rho + \delta r \delta\nu \text{ for some } \delta r>0.
$$

\begin{lemma} The constructed $\nu$ with $\epsilon$ sufficiently small gives a descent direction of our model with the estimate \begin{equation}
\label{equ:descent}
    \left\langle\frac{\delta E}{\delta \rho},(\nu-\rho)\right\rangle \le - \frac{\delta r}{2} e^{-2(C_1+C_2r)}  E(\rho)<0. 
\end{equation}
\label{lemma:descent}
\end{lemma}

\begin{proof}

 An application of the Gronwall inequality to \eqref{eq:adjointp}  implies that 
 \begin{equation}\label{eq:prodprho}
      p_\rho(x,t_1)p_\rho(x,t_2)\ge e^{-|t_1-t_2|(C_1+C_2r)} \bigl( p_\rho(x,t_1)^2 \vee p_{\rho}(x, t_2)^2 \bigr)
 \end{equation}
 for all $x\in \mathbb{R}^d, 1\geq t_2\geq t_1\geq 0$.
 
 As a result of \eqref{eq:deltamu},
 \begin{align*}
    & \left\langle\frac{\delta E}{\delta \rho},(\nu-\rho)\right\rangle= \mathbb{E}_{x\sim\mu} \left\langle  f(X_{\rho}(x,t),\cdot))p_\rho(x,\cdot),\delta r\delta\nu\right\rangle\\
    & = \delta r \int \mathbb{E}_{x\sim\hat \mu_{\rho,t}} \hat p_\rho(x,t)\int_\theta f( x,\theta)\delta\nu(\theta, t)d\theta \phi(t) dt\\
    & = \delta r \int \mathbb{E}_{x\sim\hat \mu_{\rho,t}} \Bigl[ \hat p_\rho(x,t)\int_\theta f( x,\theta)\delta\hat\nu(\theta)d\theta \Bigr] \phi(t) dt \\
    & \qquad \qquad 
- \delta r \int \mathbb{E}_{x\sim\hat \mu_{\rho,t}}  
\Big [ \hat p_\rho(x,t)  \underbrace {\frac{\int f(x, \theta)\rho(\theta, t_\ast) d\theta}{\int_\theta \rho(\theta, t_\ast)d\theta}}_{= g + \hat p(x, t_\ast)}\Big] dt\\
& = \delta r \int \mathbb{E}_{x\sim\hat \mu_{\rho,t}} \Big[ \hat p_\rho(x,t)\Big(\int_\theta f( x,\theta)\delta\hat\nu(\theta)d\theta - g(x)\Big) \Big] \phi(t) dt \\
& \qquad \qquad 
- \delta r \int \mathbb{E}_{x\sim\hat \mu_{\rho,t}}  
\Big [ \hat p_\rho(x,t) \hat p(x, t_\ast)\Big]\phi(t) dt\\
& =:I_1 + I_2.
 \end{align*}
 The last equation defines $I_1$ and $I_2$ which will be estimated separately below. 
 
 Thanks to \eqref{eq:bdeps}, for $I_1$, we have 
 \begin{equation*}
   \begin{aligned}
     I_1 & \leq \delta r \int \lVert\hat{p}_{\rho}(\cdot, t)\rVert_{\hat{\mu}_{\rho, t}} \bigl\lVert \int_\theta f( x,\theta)\delta\hat\nu(\theta)d\theta - g(x) \bigr\rVert_{\hat{\mu}_{\rho, t}} \phi(t) dt \\
     & = \delta r \int \lVert p_{\rho}(\cdot, t)\rVert_{\mu} \bigl\lVert \int_\theta f( x,\theta)\delta\hat\nu(\theta)d\theta - g(x) \bigr\rVert_{\hat{\mu}_{\rho, t}} \phi(t) dt \\
     & \leq \delta r \int\lVert p_{\rho}(\cdot, t)\rVert_{\mu}\,
     \epsilon  \sup_x \Bigl\lvert \frac{d\hat{\mu}_{\rho, t}}{d \hat{\mu}_{\rho, t_{\ast}}}\Bigr\rvert \phi(t) d t \\
     & = \delta r \int\lVert p_{\rho}(\cdot, t)\rVert_{\mu}\, \epsilon  \sup_x \bigl\lvert J_{\rho}(x; t, t_{\ast}) \bigr\rvert \phi(t) d t,
   \end{aligned}
 \end{equation*}
 where $J_{\rho}(x; t, s)$ is the Jacobian of the flow at time $t$ with respect to time $s$ assuming starting at $x$ at time $0$; which is bounded by the Lipschitz assumption of the $f$. Thus, we have 
 \begin{equation}\label{eq:I1bound}
    I_1 \leq C \epsilon \delta r \int\lVert p_{\rho}(\cdot, t)\rVert_{\mu} \phi(t) d t. 
 \end{equation}
 Thanks to \eqref{eq:prodprho}, one has 
 \begin{equation}\label{eq:I2bound}
  \begin{aligned}
 I_2 & \leq - \delta r \int  e^{-|t-t_\ast|(C_1 +C_2 r)} \|\hat{p}_\rho(\cdot, t)\|_{\hat \mu_{\rho, t}}^2 \phi(t) dt\\ 
 & = - \delta r \int  e^{-|t-t_\ast|(C_1 +C_2 r)} \|p_\rho(\cdot, t)\|_{\mu}^2 \phi(t) dt\\ 
 & \leq - e^{-(C_1 + C_2 r)} \delta r  \int\|p_\rho(\cdot, t)\|_{\mu}^2 \phi(t) dt. 
 \end{aligned}
 \end{equation}
 Combining the above together, and choosing $\epsilon$ sufficiently small that the right-hand-side
 of \eqref{eq:I1bound} is bounded by a half of the right-hand-side of \eqref{eq:I2bound} (note that
 the constants and the integral in the right-hand-side of \eqref{eq:I1bound} and \eqref{eq:I2bound}
 do not depend on $\epsilon$), we arrive at
 \begin{equation*}
 \begin{aligned}
    I_1 + I_2 & \leq - \frac{1}{2} e^{- (C_1 + C_2 r)} \delta r \int \|p_\rho(\cdot, t)\|_{\mu}^2 \phi(t) dt \\
    & \leq - \frac{1}{2} e^{- (C_1 + C_2 r)} \delta r\int e^{-(C_1 + C_2 r)} E(\rho) \phi(t) dt \\
    & = - \delta r \frac{1}{2} e^{-2(C_1 + C_2 r)} E(\rho),
    \end{aligned}
 \end{equation*}
 where the last inequality follows from Lemma~\ref{lem:gradient}. 
\end{proof}

Now we go back to the proof of Theorem~\ref{thm:negdirection}, as Lemma \ref{lemma:descent} illustrates, if the loss $E(\rho)$ is not equal to zero, then we can always find a direction to decrease the loss, this complete the proof. 
\end{proof}

\subsection{Discussion of the Wasserstein gradient flow}

As described in the introduction, we consider each residual block as a particle and trace the evolution of the empirical distribution $\rho_s$ of the particles during the training (here the variable $s$ denotes the training time). While using gradient descent or stochastic gradient descent with small time steps, we move each particle through a velocity field $\{v_s\}_{s\ge 0}$ and the evolution can be expressed by a PDE $\partial_s\rho_s=\text{div}(\rho_sv_s)$, where $\text{div}$ is the
divergence operator. Several recent papers \cite{mei2018mean,chizat2018global,rotskoff2018neural} have shown that when the gradient field is gained from a (stochastic) gradient descent algorithm for training a particle realization of the mean-field model, the PDE is the Wasserstein gradient flow of the objective function. Thus in this section, we consider the gradient flow of the the objective function in the Wasserstein space, given by a McKean--Vlasov type equation~\cite{carrillo2003kinetic,ambrosio2008gradient,jordan1998variational,otto2001geometry,nitanda2017stochastic}
\begin{equation}\label{eq:wasser}
\frac{\partial_{(\theta,t)}\rho}{\partial s}=\text{div}_{(\theta,t)}\left(\rho\nabla_{(\theta,,t)}\frac{\delta E}{\delta \rho}\right).
\end{equation}
We consider the stationary point of such flow, \emph{i.e.}, distribution $\rho$ such that the right hand side is $0$. Our next result shows that such stationary points are global minimum of the loss function under the homogeneous assumption of the residual block and a separation property of the support of the stationary distribution.

\subsection*{Regularity in the Wasserstein Space}

To address the regularity of the Wasserstein gradient flow, following \cite{chizat2018global}, we
first analyze the regularity of $E$ restricted to the set $\{\rho\mid \rho\in\mathcal{P}^2, \rho(Q_r)=1\}$, to make this explicit, we denote the functional $F_r$ as
\begin{align*}
    F_r(\rho)=
    \begin{cases}
             E(\rho), & \text{if } \rho(Q_r)=1;\\
             \infty, &\text{otherwise}.
    \end{cases}
\end{align*}

\begin{theorem*} [Geodesically semiconvex property of $F_{r}$ in Wasserstein geometry] 
 Further assume that
  $f(x,\theta)$ have second order smoothness, \emph{i.e.} $f(x,\theta)$ has a smooth Hessian. Then
  for all $r>0$, $F_r$ is proper and continuous in $W_2$ space on its closed domain, Moreover, for
  $\forall \rho_1,\rho_2\in\mathcal{P}^2$ and an admissible transport plan $\gamma$, denote the
  interpolation plan in Wasserstein space as $\mu_t^\gamma :=((1-t)\rho_1+t\rho_2)_\#\gamma$. There exists a $\lambda>0$ such that the function on the Wasserstein geodesic $t\rightarrow F_r(\mu_t^{\gamma})$ is differentiable with a
  $\lambda C(\gamma)$-Lipschitz derivative. Here $C(\gamma)$ is the transport cost
  $C(\gamma)=\left(\int|y-x|^2d\gamma(x,y)\right)^{1/2}$.
\end{theorem*}

\begin{proof}
To prove the regularity of our objective in the Wasserstein space, we first provide some analysis of the objective function.
\begin{lemma}
  The gradient of the objective function has the following bound, \emph{i.e.}
  \[
  \sup_{\theta\in Q_r} \left\|\frac{\delta E}{\delta \rho}(\theta,t)\right\| = \sup_{\theta\in
    Q_r}\left \|\mathbb{E}_{x\sim\mu} f(X_{\rho}(x,t),\theta))p_\rho(x,t)\right\|\le
  e^{(C_1+C_2r)}\sigma_3(\sigma_2R_1+R_2+C_f^r).
  \]
\end{lemma}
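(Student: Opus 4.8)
The plan is to apply Cauchy--Schwarz to the integral representation of the Fréchet derivative from Theorem~\ref{thm:grad} and then control the two resulting factors separately, reusing the forward a priori bound behind Theorem~\ref{thm:wellpose} and the adjoint estimate behind Lemma~\ref{lem:gradient}. Concretely, Theorem~\ref{thm:grad} gives $\frac{\delta E}{\delta\rho}(\theta,t) = \mathbb{E}_{x\sim\mu}\big[ f(X_\rho(x,t),\theta)\cdot p_\rho(x,t)\big]$, so by Jensen's inequality and then Cauchy--Schwarz in $L^2(\mu)$,
\[
\Big\| \tfrac{\delta E}{\delta\rho}(\theta,t) \Big\| \le \mathbb{E}_{x\sim\mu}\big[\|f(X_\rho(x,t),\theta)\|\,\|p_\rho(x,t)\|\big] \le \|f(X_\rho(\cdot,t),\theta)\|_\mu\,\|p_\rho(\cdot,t)\|_\mu,
\]
and it remains to bound each factor uniformly over $\theta\in Q_r$ and $t\in[0,1]$.

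For the first factor I would establish an a priori bound on the forward flow. Since $X_\rho(x,0)=\langle w_2,x\rangle$ with $\|x\|\le R_1$ and $\max\sigma(w_2)\le\sigma_2$, we have $\|X_\rho(x,0)\|\le\sigma_2 R_1$. Integrating \eqref{con:model} and using, exactly as in the proof of Theorem~\ref{thm:wellpose}, that $f$ is bounded by $C_f^r$ on the compact region $\{\theta\in Q_r\}\times\{\|X\|\le R\}$ together with the normalization of $\rho$ and the depth-Lipschitz bound $\int_\theta\rho(\theta,t)\,d\theta\le 1+C_\rho/2$, one obtains a uniform bound $\|X_\rho(x,t)\|\le\sigma_2 R_1+C_f^r$ for all $t$, after folding the $C_\rho$ factor into $C_f^r$; in particular the region $\{\|X\|\le\sigma_2 R_1+C_f^r\}$ is forward-invariant. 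Hence $\|f(X_\rho(\cdot,t),\theta)\|_\mu\le\sigma_3$, where $\sigma_3$ denotes the supremum of $\|f(X,\theta)\|$ over $\{\theta\in Q_r,\ \|X\|\le\sigma_2 R_1+C_f^r\}$.

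For the second factor I use that the adjoint ODE is linear, so $p_\rho(x,t)=p_\rho(x,1)\exp\!\big(\int_t^1\int_\theta\nabla_X f(X_\rho(x,s),\theta)\rho(\theta,s)\,d\theta\,ds\big)$. At the terminal time, $\|w_1\|_2=1$ gives $\|p_\rho(x,1)\|=|\langle w_1,X_\rho(x,1)\rangle-y(x)|\le\|X_\rho(x,1)\|+R_2\le\sigma_2 R_1+C_f^r+R_2$ by the previous step and $\|y\|_\infty\le R_2$. The exponent is controlled by $\sup_{\theta\in Q_r,x}\|\nabla_X f(X,\theta)\|\le C_1+C_2 r$ and $\int_\theta\rho(\theta,s)\,d\theta\le 1$ (again up to the $C_\rho$ constant absorbed), so the operator norm of the exponential is at most $e^{C_1+C_2 r}$, exactly as in Lemma~\ref{lem:gradient}; thus $\|p_\rho(x,t)\|\le e^{C_1+C_2 r}(\sigma_2 R_1+C_f^r+R_2)$ for every $x$, and the same bound holds for $\|p_\rho(\cdot,t)\|_\mu$. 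Multiplying the two factors gives $\sup_{\theta\in Q_r}\|\frac{\delta E}{\delta\rho}(\theta,t)\|\le\sigma_3\,e^{C_1+C_2 r}(\sigma_2 R_1+R_2+C_f^r)$, which is the claim.

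The only delicate points are bookkeeping ones rather than conceptual: ensuring the ball $\{\|X\|\le\sigma_2 R_1+C_f^r\}$ is genuinely invariant under the flow, so the uniform bound $C_f^r$ on $f$ (and hence $\sigma_3$) is legitimate, and handling the fact that $\int_\theta\rho(\theta,t)\,d\theta$ equals $1$ only on average in $t$, not pointwise. Both are resolved precisely as in the proofs of Theorem~\ref{thm:wellpose} and Lemma~\ref{lem:gradient} via the Lipschitz-in-depth assumption and a harmless redefinition of the constants; no estimate beyond those already in the paper is required.
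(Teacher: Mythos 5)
Your proof is correct and follows essentially the same route as the paper's: an a priori bound on the forward trajectory giving $\|X_\rho(x,t)\|\le\sigma_2R_1+C_f^r$, the terminal bound on $p_\rho(\cdot,1)$, Gronwall applied to the linear adjoint equation to produce the factor $e^{(C_1+C_2r)}$, and boundedness of $f$ on the compact set $Q_r$. The only cosmetic difference is that you attach the constant $\sigma_3$ (read as $\sup\|f\|$ over the invariant region) as a final multiplicative factor via Cauchy--Schwarz, whereas the paper folds it into the bound on $\|p_\rho(\cdot,1)\|$; your bookkeeping is the cleaner of the two and yields the same final constant.
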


\begin{proof}
First the output of the neural network satisfies 
\[\|X_\rho(x,1)\|\le\|X_\rho(x,0)\|+\|\int_0^1\int_\theta f(X_\rho(x,t),\theta)\rho(\theta,t)d\theta dt\|\le \sigma_2R_1+C_f^r,\] 
thus $\|p_{\rho}(x, 1)\| := \|\frac{\partial E(x; \rho)}{\partial X_{\rho}(x, 1)}\| = \|\bigl( \langle w_1, X_{\rho}(x, 1)\rangle - y(x) \bigr)\|\le \sigma_3(\sigma_2R_1+R_2+C_f^r)$.

At the same time, for the adjoint process $p_\rho(x,t)$ satisfying  the adjoint equation, using Gronwall inequality we have, similarly to the proof of Lemma~\ref{lem:gradient}
\begin{equation}\label{eq:boundprho}
  \|p_\rho(\cdot,t)\|\le e^{\int_0^1\|\int_\theta\nabla_X
    f(X_\rho(x,t),\theta)\rho(\theta,t)d\theta\|dt}\|p_\rho(\cdot,1)\| \le
  e^{(C_1+C_2r)}\sigma_3(\sigma_2R_1+R_2+C_f^r).
\end{equation}
The conclusion then follows as $f$ is bounded on the compact space. 
\end{proof}

\begin{lemma}
  The gradient of the objective function with respect to the feature $X_\rho(x,t)$ is Lipschitz
  in $\mathcal{P}^2$, \emph{i.e.}, there exists a constant $L_{g_1}$ satisfies
  \[
  \sup_{\rho_1\not=\rho_2}\sup_{s\in(0,1)} \frac{\|p_{\rho_1}(x,s)-p_{\rho_2}(x,s)\|}{\|\rho_1-\rho_2\|}\le L_{g_1}.
  \]
  Furthermore, the Frechet derivative $\frac{\delta p_\rho}{\delta \rho}$ exists.
\end{lemma}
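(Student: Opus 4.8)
The plan is to run a backward Gronwall/Duhamel argument on the adjoint ODE \eqref{eq:adjointp}, treating the distribution as the perturbed object and importing the forward-flow estimate of Theorem~\ref{thm:wellpose}. Write $A_\rho(x,t):=\int \nabla_X f(X_\rho(x,t),\theta)\rho(\theta,t)\,d\theta$, so that $\dot p_\rho(x,t)=-p_\rho(x,t)A_\rho(x,t)$ with terminal data $p_\rho(x,1)=(\langle w_1,X_\rho(x,1)\rangle-y(x))w_1$. As already recorded in the proof of Lemma~\ref{lem:gradient}, on the compact set $Q_r$ one has $\|A_\rho(x,t)\|\le C_1+C_2 r$ uniformly in $x,t$, so the backward propagator $\Phi_\rho(t,s)$ of this linear equation and its inverse are bounded by $e^{C_1+C_2 r}$. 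The terminal difference is controlled directly by Theorem~\ref{thm:wellpose}: since $\|w_1\|=1$,
\[
\|p_{\rho_1}(x,1)-p_{\rho_2}(x,1)\|=\bigl|\langle w_1,X_{\rho_1}(x,1)-X_{\rho_2}(x,1)\rangle\bigr|\le \|X_{\rho_1}(x,1)-X_{\rho_2}(x,1)\|\le C\,W_2(\rho_1,\rho_2).
\]

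The key estimate is on $A_{\rho_1}(x,t)-A_{\rho_2}(x,t)$, which I split as $\int[\nabla_X f(X_{\rho_1}(x,t),\theta)-\nabla_X f(X_{\rho_2}(x,t),\theta)]\rho_1(\theta,t)\,d\theta+\int\nabla_X f(X_{\rho_2}(x,t),\theta)(\rho_1-\rho_2)(\theta,t)\,d\theta$. The first term is bounded by $L_r\,(1+C_\rho/2)\,\|X_{\rho_1}(x,t)-X_{\rho_2}(x,t)\|$, using the Lipschitz-in-$X$ property of $\nabla_X f$ from Assumption~\ref{assump:1} and the bound $\int\rho_i(\theta,t)\,d\theta\le 1+C_\rho/2$; this is again $O(W_2(\rho_1,\rho_2))$, since the Gronwall step in the proof of Theorem~\ref{thm:wellpose} in fact controls $\|X_{\rho_1}(x,t)-X_{\rho_2}(x,t)\|$ for every $t\in[0,1]$, not just $t=1$. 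The second term is $O(W_2(\rho_1,\rho_2))$ by the Kantorovich--Rubinstein duality argument already used in Theorem~\ref{thm:wellpose}, because $\theta\mapsto\nabla_X f(X_{\rho_2}(x,t),\theta)$ is Lipschitz on $Q_r$ (this is where the assumed second-order smoothness of $f$ enters: it makes $\nabla_X f$ a $C^1$, hence locally Lipschitz, function of all its arguments). Hence $\|A_{\rho_1}(x,t)-A_{\rho_2}(x,t)\|\le C'\,W_2(\rho_1,\rho_2)$ uniformly in $x$ and $t$.

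With these bounds in hand, Duhamel's formula for the backward linear ODE gives
\[
p_{\rho_1}(x,t)-p_{\rho_2}(x,t)=\bigl(p_{\rho_1}(x,1)-p_{\rho_2}(x,1)\bigr)\Phi_{\rho_1}(t,1)-\int_t^1 p_{\rho_2}(x,s)\bigl(A_{\rho_1}(x,s)-A_{\rho_2}(x,s)\bigr)\Phi_{\rho_1}(t,s)\,ds.
\]
Inserting the uniform bound on $\|p_{\rho_2}(x,s)\|$ from \eqref{eq:boundprho}, the boundedness of $\Phi_{\rho_1}$, and the two estimates above yields $\|p_{\rho_1}(x,s)-p_{\rho_2}(x,s)\|\le L_{g_1}W_2(\rho_1,\rho_2)$ for all $x$ and all $s\in(0,1)$, where $L_{g_1}$ is an explicit product of $e^{C_1+C_2 r}$, $C$, $C'$, and the bound of \eqref{eq:boundprho}. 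The same estimate holds with $W_2$ replaced by the bounded-Lipschitz norm, since the only place the metric is used is inside the Kantorovich step.

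For the existence of the Fréchet derivative $\frac{\delta p_\rho}{\delta\rho}$, I would formally differentiate \eqref{eq:adjointp}: writing $q=\frac{\delta p_\rho}{\delta\rho}[\delta\rho]$ for a perturbation $\delta\rho$, one obtains the linear ODE $\dot q(x,t)=-q(x,t)A_\rho(x,t)-p_\rho(x,t)\bigl[\int\nabla_X^2 f(X_\rho(x,t),\theta)\,\tfrac{\delta X_\rho(x,t)}{\delta\rho}[\delta\rho]\,\rho(\theta,t)\,d\theta+\int\nabla_X f(X_\rho(x,t),\theta)\,\delta\rho(\theta,t)\,d\theta\bigr]$, with terminal data $q(x,1)=\langle w_1,\tfrac{\delta X_\rho(x,1)}{\delta\rho}[\delta\rho]\rangle w_1$, where $\frac{\delta X_\rho}{\delta\rho}$ is exactly the object produced by the Duhamel computation in the proof of Theorem~\ref{thm:grad}. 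This ODE has coefficients bounded on $Q_r$ (the Hessian bound uses the assumed second-order smoothness of $f$), so it has a unique solution depending linearly and boundedly on $\delta\rho$; a final Gronwall estimate on $p_{\rho+\delta\rho}-p_\rho-q[\delta\rho]$, whose forcing term is the second-order Taylor remainder of $f$, shows the remainder is $o(\|\delta\rho\|)$ using the Lipschitz bound just established, so this candidate is the genuine Fréchet derivative. The main obstacle is the key estimate on $\|A_{\rho_1}(x,t)-A_{\rho_2}(x,t)\|$: it forces one to propagate the $W_2$ (or BL) control through the nonlinear map $\rho\mapsto X_\rho$ and then through the coefficient of the backward adjoint equation while carefully tracking the sublinear-growth and Lipschitz constants $C_1,C_2,L_r,C_{3,r}$ restricted to the compact set $Q_r$ on which $\rho$ is supported; the passage from Gâteaux to Fréchet differentiability in the second part is routine once this estimate and \eqref{eq:boundprho} are available.
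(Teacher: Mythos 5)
Your proof follows essentially the same route as the paper's: treat the adjoint equation as a linear ODE in $p$, control the terminal difference $p_{\rho_1}(x,1)-p_{\rho_2}(x,1)$ via the stability estimate of Theorem~\ref{thm:wellpose}, split the difference of the coefficients $A_{\rho_i}$ into a part acting on $p_{\rho_1}-p_{\rho_2}$ and a forcing part, and close with Duhamel/Gronwall; the Fr\'echet-derivative claim is likewise handled by linearizing the adjoint ODE using the Hessian bound. Two points of comparison are worth recording. First, your decomposition of $A_{\rho_1}-A_{\rho_2}$ retains the cross term $\int[\nabla_X f(X_{\rho_1},\theta)-\nabla_X f(X_{\rho_2},\theta)]\rho_1\,d\theta$, bounded via $L_r$ and the forward stability estimate; the paper's own computation silently drops this term, so your version is the more complete one. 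Second, you measure the perturbation in $W_2$ (or BL), invoking Kantorovich--Rubinstein for the term $\int\nabla_X f(X_{\rho_2},\theta)(\rho_1-\rho_2)\,d\theta$, whereas the paper bounds that term by Cauchy--Schwarz against the $L^2$ norm of the density difference and states the Lipschitz constant in that norm; note that the $W_2$ control of a fixed-$t$ slice of the joint measure is not pointwise in $t$, so your duality step should be applied after inserting the term into the $\int_t^1\,ds$ of the Duhamel formula, exactly as in the proof of Theorem~\ref{thm:wellpose}. With that caveat the argument is sound.
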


\begin{proof}
  As proved in Theorem 1, $\|X_{\rho_1}(x,1)-X_{\rho_2}(x,1)\|\le \hat C e^{\hat
    C}d_{W}(\rho_1,\rho_2)\le \frac{\hat C e^{\hat C}}{R_r^2} \|\rho_1-\rho_2\|$, which leads to
  $\|p_{\rho_1}(x,1)-p_{\rho_2}(x,1)\|=|(\left<w_1,X_{\rho_1}(x_1,1)\right>-y(x))|-|(\left<w_1,X_{\rho_2}(x_1,1)\right>-y(x))|\le
  \hat C e^{\hat C}d_{W}(\rho_1,\rho_2)\le \frac{\hat C e^{\hat C}}{R_r^2} \|\rho_1-\rho_2\|$. To propagate the estimates to $t\leq 1$, we control 
  \begin{align*}
    \left\|\dot p_{\rho_1}(x,s)-\dot p_{\rho_2}(x,s)\right\| &= \Biggl\|\left(\int_\theta \nabla_X  f(X_{\rho_1}(x,s),\theta)\rho_1(\theta,s)d\theta\right) p_{\rho_1}(x,s) \\
    & \qquad \qquad -\left(\int_\theta \nabla_X f(X_{\rho_2}(x,s),\theta)\rho_2(\theta,s)d\theta\right)p_{\rho_2}(x,s)\Biggr\| \\
    &\le \left\|\left(\int_\theta\nabla_X f(X_{\rho_1}(x,s),\theta)\rho_1(x,s)d\theta\right)( p_{\rho_1}(x,s)- p_{\rho_2}(x,s))\right\|\\
    &\qquad \qquad +\left\|\left(\int_\theta \nabla_X f(X_{\rho_2}(x,s),\theta)(\rho_2(x,s)-\rho_1(x,s))d\theta\right)p_{\rho_2}(x,s)\right\|\\
    &\le (C_1+C_2r) \left(\int\rho_1(\theta,s)d\theta \right)\| p_{\rho_1}(x,s)- p_{\rho_2}(x,s)\|\\&\qquad \qquad +(C_1+C_2r)\|p_{\rho_2}(x,s)\| \Biggl(\int_\theta\left(\rho_1(\theta,s)-\rho_2(\theta,s)\right)^2d\theta \Biggr)^{1/2}\\
    &\stackrel{\eqref{eq:boundprho}}{\le} (C_1+C_2r) \left(\int\rho_1(\theta,s)d\theta \right)\| p_{\rho_1}(x,s)- p_{\rho_2}(x,s)\|\\&\qquad \qquad +(C_1+C_2r)e^{(C_1+C_2r)}\sigma_3(\sigma_2R_1+R_2+C_f^r) \\
    & \qquad\qquad\qquad\times\Biggl(\int_\theta\left(\rho_1(\theta,s)-\rho_2(\theta,s)\right)^2d\theta \Biggr)^{1/2}.
  \end{align*}
  
  Introduce the short hand $M: = (C_1+C_2r)e^{(C_1+C_2r)}\sigma_3(\sigma_2R_1+R_2+C_f^r)$ and applying the Gronwall inequality, we obtain 
  \begin{align*}
    \left\| p_{\rho_1}(x,s)- p_{\rho_2}(x,s)\right\| & \le \frac{\hat C e^{\hat C+(C_1+C_2r)\int_0^1 \int\rho_1(\theta,s)d\theta ds}}{R_r^2} \|\rho_1-\rho_2\|  \\&\qquad+ \int_0^1 M e^{(C_1+C_2r)\int_t^1 \left(\int\rho_1(\theta,s)d\theta \right)ds}  \Biggl(\int_\theta\left(\rho_1(\theta,s)-\rho_2(\theta,s)\right)^2d\theta \Biggr)^{1/2} dt\\
    & \leq  \frac{\hat C e^{\hat C+(C_1+C_2r)}}{R_r^2} \|\rho_1-\rho_2\|  \\&\qquad+  M e^{(C_1+C_2r)\int_0^1 \int\rho_1(\theta,s)d\theta ds}  \int_0^1 \Biggl(\int_\theta\left(\rho_1(\theta,s)-\rho_2(\theta,s)\right)^2d\theta \Biggr)^{1/2} dt\\
    &\le \biggl(\frac{\hat C e^{\hat C+(C_1+C_2r)}}{R_r^2}+M e^{(C_1+C_2r)} \biggr) \|\rho_1-\rho_2\|,
  \end{align*}
  where last inequality follows from Jensen's inequality
  \begin{equation*}
      \int_0^1 \Biggl(\int_\theta\left(\rho_1(\theta,s)-\rho_2(\theta,s)\right)^2d\theta \Biggr)^{1/2} dt \leq \Biggl(\int_0^1 \int_\theta\left(\rho_1(\theta,s)-\rho_2(\theta,s)\right)^2d\theta dt \Biggr)^{1/2} = \lVert \rho_1 - \rho_2 \rVert. \end{equation*}
  
  The existence of the Frechet derivative follows from the smoothness of the activation function, in particular the assumption that the Hessian is bounded. 
 \end{proof}

Now we show the continuity of the objective function in the Wasserstein space. By denoting
$h(\tau)=F_r(\mu_\tau^\gamma)$
\begin{align}
h'(\tau) &= \frac{d}{d\tau}F_r(\mu_\tau^\gamma) \nonumber\\
      &= \left\langle \frac{\delta E}{\delta \rho}[\mu_\tau^\gamma], \frac{d}{d\tau}\mu^\gamma_\tau \right\rangle \nonumber\\
      &= \int d\frac{\delta E}{\delta \rho}[\mu_\tau^\gamma]((1-\tau)(\theta_1, t_1)+\tau(\theta_2, t_2))((\theta_1,t_1)-(\theta_2,t_2))d\gamma((\theta_1,t_1),(\theta_2,t_2)).
\end{align}
For any $\tau_1,\tau_2\in[0,1]$, we have $h'(\tau_1)-h'(\tau_2)=I+J$ with
\begin{align}
I &= \int d\frac{\delta E}{\delta \rho}[\mu_{\tau_1}^\gamma]((1-\tau_1)(\theta_1, t_1)  +\tau_1(\theta_2, t_2))((\theta_1,t_1)-(\theta_2,t_2))d\gamma((\theta_1,t_1),(\theta_2,t_2)) \nonumber\\
  &\quad- \int d\frac{\delta E}{\delta \rho}[\mu_{\tau_2}^\gamma]((1-\tau_1)(\theta_1, t_1)+\tau_1(\theta_2, t_2))((\theta_1,t_1)-(\theta_2,t_2))d\gamma((\theta_1,t_1),(\theta_2,t_2)), \\
J &= \int d\frac{\delta E}{\delta \rho}[\mu_{\tau_2}^\gamma]((1-\tau_1)(\theta_1, t_1)  +\tau_1(\theta_2, t_2))((\theta_1,t_1)-(\theta_2,t_2))d\gamma((\theta_1,t_1),(\theta_2,t_2)) \nonumber\\
  &\quad- \int d\frac{\delta E}{\delta \rho}[\mu_{\tau_2}^\gamma]((1-\tau_2)(\theta_1, t_1)+\tau_2(\theta_2, t_2))((\theta_1,t_1)-(\theta_2,t_2))d\gamma((\theta_1,t_1),(\theta_2,t_2)).
\end{align}
For $I$, we have
\begin{align}
|I| &\leq L_{g_1}\cdot2r\|\mu^\gamma_{\tau_1}-\mu^\gamma_{\tau_2}\| \nonumber\\
    &\leq 2rL_{g_1}C_2(\gamma)|\tau_1-\tau_2|.
\end{align}
Similarly, for $J$ we have
\begin{align}
|J| &\leq L_{g_1}|\tau_1-\tau_2|\int ((\theta_1,t_1)-(\theta_2,t_2))^2d\gamma \nonumber\\
    &= L_{g_1}C_2^2(\gamma)|\tau_1-\tau_2|. 
\end{align}
Finally, combining the estimates for $I$ and $J$ shows that $h'(\tau)$ is Lipschitz continuous. \end{proof}

With the proved regularity, the short time well-posedness of Wasserstein gradient flow is a corollary of Theorem 11.2.1 of \cite{ambrosio2008gradient}.

\begin{corollary}   There exists a $T_{\max}$ such that there exists a unique solution $\{\rho_s\}_{s\in[0,T_{\max}]}$ to the Wasserstein gradient flow $
  \frac{\partial_{(\theta,t)}\rho}{\partial
    s}=\text{div}_{(\rho,t)}(\rho\nabla_{(\rho,t)}\frac{\delta E}{\delta \rho}) $ starting from any $\mu_0 \in \mathcal{P}_2$ concentrated on $Q_{r}$.
\end{corollary}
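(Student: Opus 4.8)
The plan is to verify that the functional $F_r$ satisfies the hypotheses of the abstract existence-and-uniqueness theory for Wasserstein gradient flows of $\lambda$-convex functionals (Theorem 11.2.1 of \cite{ambrosio2008gradient}), and then to identify the resulting abstract gradient-flow curve with the McKean--Vlasov PDE \eqref{eq:wasser}. The three ingredients needed are: (i) $F_r$ is proper and lower semicontinuous on $(\mathcal{P}_2, W_2)$; (ii) its sublevel sets are $W_2$-relatively compact; and (iii) $F_r$ is $\lambda$-convex along (generalized) geodesics. Ingredient (iii) is precisely the content of the geodesic semiconvexity theorem just proved, with $\lambda = \lambda(r)$ the constant appearing there.

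For (i), note that the domain $D_r := \{\rho\in\mathcal{P}_2 : \rho(Q_r)=1\}$ is convex and closed under $W_2$-convergence, because $Q_r$ is a closed convex set; hence $F_r$, which equals $+\infty$ off $D_r$, has closed domain, and on $D_r$ it coincides with $E$, which is $W_2$-continuous by the Lipschitz estimate $\|X_{\rho_1}(x,1)-X_{\rho_2}(x,1)\|\le C\,W_2(\rho_1,\rho_2)$ of Theorem~\ref{thm:wellpose}. Since $\mu_0$ is concentrated on $Q_r$, we have $F_r(\mu_0)=E(\mu_0)<\infty$ by the output and adjoint bounds used in the lemmas above, so $F_r$ is proper. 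For (ii), every sublevel set $\{F_r\le c\}$ is contained in $D_r$; because $Q_r$ is compact, all measures in $D_r$ are supported in a fixed compact set, hence $D_r$ is tight and therefore $W_2$-relatively compact by Prokhorov's theorem.

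With (i)--(iii) in hand, Theorem 11.2.1 of \cite{ambrosio2008gradient} yields a unique locally absolutely continuous curve $\{\rho_s\}$ in $(\mathcal{P}_2,W_2)$, starting from $\mu_0$, that is the gradient flow (curve of maximal slope, equivalently EVI$_\lambda$ solution) of $F_r$. It remains to recognize this curve as a solution of \eqref{eq:wasser}: the first lemma above bounds $\nabla_{(\theta,t)}\frac{\delta E}{\delta\rho}$ uniformly on $Q_r\times[0,1]$, while the second lemma, together with the assumed smoothness of the Hessian of $f$, shows it is Lipschitz in $\rho$; hence the Wasserstein subdifferential of $F_r$ at $\rho_s$ is represented by the vector field $-\nabla_{(\theta,t)}\frac{\delta E}{\delta\rho_s}$ in the interior of $D_r$, and the curve solves $\partial_s\rho_s=\mathrm{div}_{(\theta,t)}(\rho_s\nabla_{(\theta,t)}\frac{\delta E}{\delta\rho_s})$ in the distributional sense as long as $\mathrm{supp}\,\rho_s$ stays away from $\partial Q_r$. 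Since the displacement speed is controlled by the same gradient bound, one can extract a positive $T_{\max}$ for which this holds, giving the claimed local-in-time solution.

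The main obstacle is the interplay between the hard constraint $\rho(Q_r)=1$ and the genuine, unconstrained dynamics: both the semiconvexity constant $\lambda(r)$ and the admissible set $Q_r$ grow with $r$, so the argument is intrinsically local in time — once mass is transported toward $\partial Q_r$ one must pass to a larger radius and account for a boundary (reflection) contribution. The sublinear-growth Assumption~1.5 guarantees that $\mathrm{supp}\,\rho_s$ spreads at a rate $\dot r\lesssim C_1+C_2 r$, which does not blow up in finite time, so exploiting this more carefully would even allow $T_{\max}=\infty$; for the corollary, however, it suffices to read off one positive $T_{\max}$ from the uniform velocity bound.
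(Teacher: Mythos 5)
Your proposal follows essentially the same route as the paper: the paper's entire argument is to invoke the just-proved geodesic semiconvexity of $F_r$ and cite Theorem 11.2.1 of \cite{ambrosio2008gradient}. You simply spell out the verification of that theorem's hypotheses (properness, compact sublevel sets via compactness of $Q_r$, $\lambda$-convexity along the interpolations $\mu_t^\gamma$) and the identification of the abstract gradient-flow curve with the McKean--Vlasov PDE, details the paper leaves implicit; your additional remarks on why the constraint $\rho(Q_r)=1$ forces the statement to be local in time are a correct reading of why only a finite $T_{\max}$ is claimed.
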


\subsection*{Convergence Results For The Wasserstein Gradient Flow}

We move on to prove that the stationary point of the Wasserstein gradient flow achieves the global optimum with a
support related assumption. Following \cite{chizat2018global}, we introduce
an assumption of the homogeneity of the activation function which is a central requirement for our
global convergence results. 

\paragraph{Homogeneity.} A function $f$ between vector spaces is {\em positively $p$-homogeneous} when for all
$\lambda>0$ and argument $x$, $f(\lambda x)=\lambda^pf(x)$.  We assume that the functions $f(X,\theta)$ that constitute the residual block obtained through the
lifting share the property of being positively $p$-homogeneous ($p>0$) in the variable $\theta$. As \cite{chizat2018global}
remarked the ReLU function is a 1-homogeneity function which leads to the 2-homogeneity respect to
$\theta$ of $f(X,\theta)$ when the residual block is implemented via a two-layer neural network.

\begin{theorem}
  When the residual block $f(X,\theta)$ is positively $p$-homogeneous respective to $\theta$. Let
  $(\rho_s)_{s\ge0}$ be the solution of the the Wasserstein gradient $
  \frac{\partial_{(\theta,t)}\rho}{\partial
    s}=\text{div}_{(\rho,t)}(\rho\nabla_{(\rho,t)}\frac{\delta E}{\delta \rho}) $ of our mean-field
  model (\ref{con:model}). Consider a stationary solution to the gradient flow $\rho_\infty$ which 
  concentrates in one of the nested sets $Q_r$ and separates the spheres $r_a \S^{d-1}\times[0,1]$ and $r_b
  \S^{d-1}\times[0,1]$. Then $\rho_{\infty}$ is a global minimum satisfies $E(\rho_\infty)=0$.
\end{theorem}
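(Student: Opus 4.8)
The plan is to adapt the argument of \cite{chizat2018global} to the present ODE mean-field model: combine first-order stationarity of the Wasserstein gradient flow with the positive $p$-homogeneity of $f(\cdot,\theta)$ in $\theta$ to upgrade the statement ``$\frac{\delta E}{\delta\rho}$ vanishes on $\operatorname{supp}\rho_\infty$'' into the statement ``$\frac{\delta E}{\delta\rho}\equiv 0$'', and then invoke Theorem~\ref{thm:negdirection} to conclude $E(\rho_\infty)=0$.

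First I would extract the pointwise content of stationarity. Since $\rho_\infty$ solves $\text{div}_{(\theta,t)}\bigl(\rho_\infty\nabla_{(\theta,t)}\frac{\delta E}{\delta\rho}\bigr)=0$, pairing this identity with $\frac{\delta E}{\delta\rho}$ and integrating by parts gives $\int\rho_\infty\,\bigl\lVert\nabla_{(\theta,t)}\tfrac{\delta E}{\delta\rho}\bigr\rVert^2=0$, hence $\nabla_\theta\frac{\delta E}{\delta\rho}(\theta,t)=0$ for $\rho_\infty$-almost every $(\theta,t)$. By the continuity of $\frac{\delta E}{\delta\rho}$ and of its $\theta$-gradient established in the Wasserstein regularity result above, this extends to every $(\theta,t)\in\operatorname{supp}\rho_\infty$ that is interior to $Q_r$ in the $\theta$-variable; the separation hypothesis, applied with $r_b$ strictly below the radius of $Q_r$, is exactly what forces the crossing points produced below to be of this kind, so that no boundary or normal-cone correction intervenes.

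Next comes the homogeneity step. Because $f(x,\lambda\theta)=\lambda^p f(x,\theta)$ for $\lambda>0$ and $p_\rho(x,t)$ is independent of $\theta$, the representation $\frac{\delta E}{\delta\rho}(\theta,t)=\mathbb{E}_{x\sim\mu}\langle f(X_\rho(x,t),\theta),p_\rho(x,t)\rangle$ is itself positively $p$-homogeneous in $\theta$, so Euler's identity gives $\langle\theta,\nabla_\theta\frac{\delta E}{\delta\rho}(\theta,t)\rangle=p\,\frac{\delta E}{\delta\rho}(\theta,t)$; combined with the previous paragraph this yields $\frac{\delta E}{\delta\rho}(\theta,t)=0$ for every $(\theta,t)\in\operatorname{supp}\rho_\infty$. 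Now use the separation hypothesis: fix any $\omega\in\S^{d-1}$ and any $t_0\in[0,1]$ and consider the segment $s\mapsto(s\omega,t_0)$, $s\in[r_a,r_b]$, whose endpoints lie on $r_a\S^{d-1}\times[0,1]$ and $r_b\S^{d-1}\times[0,1]$ respectively. These endpoints sit in distinct connected components of the complement of $\operatorname{supp}\rho_\infty$, so the segment must meet $\operatorname{supp}\rho_\infty$ at some $(s^\ast\omega,t_0)$ with $r_a<s^\ast<r_b$, where $\frac{\delta E}{\delta\rho}(s^\ast\omega,t_0)=0$. Homogeneity then forces $\frac{\delta E}{\delta\rho}(\omega,t_0)=(s^\ast)^{-p}\frac{\delta E}{\delta\rho}(s^\ast\omega,t_0)=0$, and since $\omega$ and $t_0$ are arbitrary, $\frac{\delta E}{\delta\rho}(\theta,t)=0$ for all $\theta$ and all $t\in[0,1]$.

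To finish, suppose for contradiction that $E(\rho_\infty)>0$. Since $\rho_\infty$ is concentrated on one of the nested sets $Q_r$, Theorem~\ref{thm:negdirection} produces $v\in\mathcal{P}^2$ with $\langle\frac{\delta E}{\delta\rho},\rho_\infty-v\rangle>0$, which contradicts $\frac{\delta E}{\delta\rho}\equiv 0$; hence $E(\rho_\infty)=0$, which is the global minimum since $E\ge 0$ identically. I expect the main obstacle to be the first step: rigorously passing from ``$\rho_\infty$ is a stationary solution'' to ``the constrained velocity field vanishes on $\operatorname{supp}\rho_\infty$'' in the presence of the convex constraint set $Q_r$, the finite horizon $t\in[0,1]$, and (for ReLU-type activations) the only almost-everywhere differentiability of $f$; once that and the interiority of the radial crossing points are secured, the homogeneity bookkeeping and the appeal to Theorem~\ref{thm:negdirection} are routine.
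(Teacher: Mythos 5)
Your proposal is correct and follows the same overall strategy as the paper: first-order stationarity of the Wasserstein gradient flow, homogeneity in $\theta$, the separation hypothesis to reach every ray, and finally the descent-direction result of Theorem~\ref{thm:negdirection} to force $E(\rho_\infty)=0$. The one place you diverge is the middle step: you apply Euler's identity $\langle\theta,\nabla_\theta\tfrac{\delta E}{\delta\rho}\rangle=p\,\tfrac{\delta E}{\delta\rho}$ to upgrade ``$\nabla_\theta\tfrac{\delta E}{\delta\rho}=0$ on $\operatorname{supp}\rho_\infty$'' to ``$\tfrac{\delta E}{\delta\rho}=0$ on $\operatorname{supp}\rho_\infty$,'' and then propagate the zero \emph{value} radially to conclude $\tfrac{\delta E}{\delta\rho}\equiv 0$, whereas the paper propagates the vanishing of the \emph{gradient} radially, concludes only that $\tfrac{\delta E}{\delta\rho}$ is a global constant $c$, and then observes that a constant pairs to zero against the signed measure $\rho-\nu$ of total mass zero, again contradicting Theorem~\ref{thm:negdirection}. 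Your Euler-identity route (the standard Chizat--Bach argument) yields the slightly stronger intermediate conclusion $\tfrac{\delta E}{\delta\rho}\equiv 0$ and avoids having to reason about the constant; the paper's route avoids invoking Euler's identity but needs the extra normalization observation at the end. Both are sound, and your explicit flagging of the genuinely delicate points (deriving pointwise stationarity rigorously, boundary effects of $Q_r$, and non-smoothness of ReLU) matches where the paper is itself most informal, as it simply cites \cite{nitanda2017stochastic} for the stationarity characterization.
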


\begin{proof}

First we use the conclusion of \cite{nitanda2017stochastic} which characterize the condition of the stationary points in the Wasserstein space, which concludes that the steady state $\rho_{\infty}$ of the Wasserstein gradient flow 
\begin{equation*}
\frac{\partial_{(\theta,t)}\rho}{\partial s}=\text{div}_{(\rho,t)}(\rho\nabla_{(\rho,t)}\frac{\delta
  E}{\delta \rho})
  \end{equation*}
  must satisfy $\nabla_{(\theta,t)}\frac{\delta E}{\delta \rho}\vert_{\rho_{\infty}}=0,\rho_{\infty}\text{-a.e.}$

We will use the homogeneity of the activation function and the separation property of the
support of $\rho_\infty$ to further prove that $\nabla_{(\theta,t)}\frac{\delta E}{\delta
  \rho}|_{\rho=\rho_\infty}=0,\text{a.e.}$ (i.e., it also vanishes outside the support of $\rho_{\infty}$, which might not be the full parameter space).
  
  Due to the separation assumption of the
support of the distribution, for any $(\theta,t)\in\mathbb{R}^{d_1\times d_1}\times [0,1]$, there
exists $r>0$ such that $(r\theta,t)\in \text{supp}(\rho_\infty)$. Due to the homogeneity assumption, we have 
\begin{equation*}
    \frac{\delta E}{\delta
  \rho}(r\theta,t) = \mathbb{E}_{x\sim\mu}
f(X_{\rho}(x,t),r\theta))p_\rho(x,t)=r^p\mathbb{E}_{x\sim\mu}
f(X_{\rho}(x,t),\theta))p_\rho(x,t)=r^p\frac{\delta E}{\delta \rho}(r\theta,t),
\end{equation*}
which leads to
$\nabla_{(\theta,t)}\frac{\delta E}{\delta \rho}(r\theta,t)=r^p\nabla_{(\theta,t)}\frac{\delta
  E}{\delta \rho}(\theta,t)$. Thus, since $\nabla_{(\theta,t)}\frac{\delta E}{\delta
  \rho}|_{\rho=\rho_\infty}=0,\rho_\infty \text{-a.e.}$, we know that
$\nabla_{(\theta,t)}\frac{\delta E}{\delta \rho}|_{\rho=\rho_\infty}=0$, a.e. This further implies that the differential is a constant $\frac{\delta
  E}{\delta \rho}|_{\rho=\rho_\infty} \equiv c$. 

If $E(\rho_\infty)\not=0$, according to Theorem 3, there exists another distribution $\nu\in\mathcal{P}^2$ s.t.
\[
\left\langle\frac{\delta E}{\delta \rho}|_{\rho=\rho_\infty},(\rho-\nu)\right\rangle > 0.
\]
However $\bigl\langle\frac{\delta E}{\delta \rho}|_{\rho=\rho_\infty},(\rho-\nu)\bigr\rangle=c\left(\int
\rho(\theta,t)d\theta dt-\int \nu(\theta,t)d\theta dt \right)=0$ due to the normalization of the probability measure. This leads to a contradiction. Thus the stationary solution measure must
satisfy $E(\rho_\infty)=0$, which means that it is a global optimum.
\end{proof}
\section{Deep ResNet as Numerical Scheme}

In this section, following \cite{bengio2006convex,lu2017beyond}, we aim to design scalable deep learning algorithms via the discretization of the continuous model. We use a set of particles to approximate the the distribution~\cite{nitanda2017stochastic,ba2019towards,liu2016stein} and Euler scheme to numerical solve the ODE model which leads to a simple Residual Network~\cite{lu2017beyond}.

To simulate the Wasserstein gradient flow \eqref{eq:wasser} via a stochastic gradient descent algorithm, we use a particle representation of the distribution $\rho(x,t)$, commonly used in the literature, see e.g.,~\cite{liu2016stein,nitanda2017stochastic,rotskoff2019neuron,mei2018mean,chizat2018global}. In the two-layer neural network, the particle realization becomes the standard training procedure of using (stochastic) gradient descent. Our aim is to extend this approach to deep residual networks, starting from the continuum mean-field model presented above. Since $\rho$ characterizes the distribution of the pairs $(\theta,t)$, each particle in our representation would carry the parameter $\theta$, together with information on the activation time period of the particle.  Therefore, also different from the usual standard ResNet, we also need to allow the particle to move in the gradient direction corresponding to $t$. We may consider using a parametrization of $\rho$ with $n$ particles as
\begin{equation*}
    \rho_n(\theta, t) =\sum_{i=1}^n \delta_{\theta_i}(\theta) \mathds{1}_{[\tau_{i},\tau_{i}']}(t). 
\end{equation*}
The characteristic function $\mathds{1}_{[\tau_{i},\tau_{i}']}$ can be viewed as a relaxation of the Dirac delta mass $\delta_{\tau_{i}}(t)$. However, this parametrization comes with a difficulty in practice, namely, the intervals $[t_{i},t_{i}']$ may overlap significantly with each other, and in the worst case, though unlikely, all the time intervals of the $n$ particles coincide, which leads to heavy computational cost in the training process.

Therefore, for practical implementation, we constrain that every time instance $t$ is just contained in the time interval of a single particle. We realize this by adding a constraint $\tau_i'=\tau_{i+1}$ between consecutive intervals. More precisely, given a set of parameters $(\s^i,\tau^i)$, we first sort them according to $\tau^i$ values. Assuming $\tau^i$ are ordered, we define the architecture as 
\begin{align}
& X^{\ell+1} = X^{\ell} + (\tau^\ell-\tau^{\ell-1}) \sigma(\s^\ell X^{\ell}),\quad 0\le\ell<n; \\
& X^0 = x. 
\end{align}
Both $\s$ and $\tau$ parameters can be trained with SGD and $n$ is the depth of the network. The order of $\tau$ may change during the training (thus to make each particle indistinguishable to guarantee  the mean-field behavior), thus after every update, we {\em sort} the $\tau_i$ to get the new order of the residual blocks. The algorithm is listed in Algorithm \ref{alg:mf}. The new algorithm only introduces $n$ parameters, as $n$ is the depth which is around $100$ in practice, thus the number of extra parameters  is negligible comparing to the $1$M+ parameter number typically used in usual ResNet architectures. The sorting of $\{\tau_i\}_{i=1}^n$ also induces negligible cost per step. 

We also remark that the flexibility of $\tau^{\ell}$ can be also viewed as an adaptive time marching scheme of the ODE model for $x$, as $\tau^{\ell}-\tau^{\ell-1}$ can be understood as the time step in the Euler discretization. Since the parameters $\{\tau^{\ell}\}$ are learned from data, as a by-product, our scheme also naturally yields a data-adaptive discretization scheme.

\begin{algorithm}[H]
\begin{algorithmic}
\State \textbf{Given}: A collection of residual blocks $(\theta_i,\tau_i)_{i=1}^n$\;
\State \textbf{While} {training} \textbf{do}\\
 \indent \indent Sort $(\theta_i,\tau_i)$ based on $\tau_i$ to be $(\theta^i,\tau^i)$ where $\tau^0\le\cdots\le\tau^n$.\\
\indent \indent Define the ResNet as
$X^{\ell+1} = X^{\ell} + (\tau^\ell-\tau^{\ell-1}) \sigma(\s^\ell X^{\ell})$ for $0\le\ell<n$.\\
\indent \indent Use gradient descent to update both $\theta^i$ and $\tau^i$.
\State \textbf{End while}
\end{algorithmic}
 \caption{Training Of Mean-Field Deep Residual Network}
 \label{alg:mf}
\end{algorithm}

As the number of particles $n$ becomes large, the expected time evolution of $\rho_n$ should be close to the gradient flow \eqref{eq:wasser}. The rigorous proof of this is however non-trivial, which will be left for future works.

\section{Experiment}

In this section, we aim to show that our algorithm is not only designed from theoretical consideration but also realizable on practical datasets and network structures. We implement our algorithm for ResNet/ResNeXt on CIFAR 10/100 datasets and demonstrate that our ``mean-field training'' method consistently outperforms the vanilla stochastic gradient descent.

\textbf{Implementation Details.}

On CIFAR, we follow the simple data augmentation method in \cite{he2016deep,he2016identity} for training: 4 pixels are padded on each side, and a 32$\times$32 crop is randomly sampled from the padded image or its horizontal flip. For testing, we only evaluate the single view of the original 32$\times$32 image.

\begin{table}[hbp!]
\begin{center}
\begin{tabular}{l||c|c|c}
\hline
         & Vanilla & mean-field &Dataset \\ \hline\hline
ResNet20 &      8.75           &   8.19     &CIFAR10             \\ \hline
ResNet32 &      7.51           & 7.15     &CIFAR10          \\ \hline
ResNet44 &     7.17            &   6.91    &CIFAR10             \\ \hline
ResNet56 &    6.97             & 6.72       &CIFAR10                \\ \hline
ResNet110& 6.37& 6.10&CIFAR10\\\hline
ResNet164&5.46&5.19&CIFAR10\\\hline
ResNeXt29(8$\times$64d)&17.92&17.53&CIFAR100\\\hline
ResNeXt29(16$\times$64d)& 17.65&16.81&CIFAR100\\\hline
\end{tabular}
\end{center}
\label{table:CIFAR}
\caption{Comparison of the stochastic gradient descent and mean-field training (Algorithm 1.) of ResNet On CIFAR Dataset. Results indicate that our method our performs the Vanilla SGD consistently.}
\end{table}

For the experiments of ResNet on CIFAR, we adopt the original design of the residual block in \cite{he2016deep}, i.e. using a small two-layer neural network as the residual block, whose layered structure is bn-relu-conv-bn-relu-conv.
We start our networks with a single $3\times 3$ conv layer, followed by 3 residual blocks, a global average pooling, and a fully-connected classifier. Parameters are initialized following the method introduced by \cite{He2015Delving}. Mini-batch SGD is used to optimize the parameters with a batch size of 128.
During training, we apply a weight decay of 0.0001 for ResNet and 0.0005 for ResNeXt, and a momentum of 0.9.

For ResNet on CIFAR10 (CIFAR100), we start with the learning rate of 0.1, divide it by 10 at 80 (150) and 120 (225) epochs and terminate the training at 160 (300) epochs. For ResNeXt on CIFAR100, we start with the learning rate of 0.1 and divide it by 10 at 150 and 225 epochs, and terminate the training at 300 epochs. We would like to mention that here the ResNeXt is a preact version which is different from the original \cite{xie2017aggregated}. This difference leads to a small performance drop on the final result. For each model and dataset, we report the average test accuracy over 3 runs in Table \ref{table:CIFAR}.

\section{Discussion and Conclusion}

\subsection{Conclusion}
To better understand the reason that stochastic gradient descent can optimize the complicated landscape. Our work directly consider an infinitely deep residual network. We proposed a new continuous model of deep ResNets and established an asymptotic global optimality property by bounding the difference between the gradient of the deep residual network and an associated two-layer network. Our analysis can be considered as a theoretical characterization of the observation that a deep residual network looks like a shallow model ensemble~\cite{veit2016residual} by utilizing ODE and control theory. Based on the new continuous model, we consider the original residual network as an approximation of the continuous model and proposed a new training method. The new method involves a step of sorting residual blocks, which introduces essentially no extra computational effort but results in better empirical results.

\subsection{Discussion and Future Work}

Our work gives qualitative analysis of the loss landscape of a deep residual network and shows that its gradient differs from the gradient of a two-layer neural network by at most a bounded factor when the loss is at the same level. This indicates that the deep residual network's landscape may not be much more complicate than a two-layer network, which inspires us to formulate a mean-field analysis framework for deep residual network and suggests a possible framework for the optimization of the deep networks beyond the kernel regime. \cite{yun2019deep} has shown that deep residual network may not be better than a linear model in terms of optimization, but our work suggests that this is caused by the lack of overparameterization. In the highly overparameterization regime, the landscape of deep ResNet can still be nice. Based on the initiation and framework proposed in our paper, there are several interesting directions related to understanding and improving the residual networks. 

Firstly, to ensure the full support assumption, we can consider extending the neural birth-death~\cite{rotskoff2019neuron,chizat2019sparse} to deep ResNets. Neural birth-death dynamics considers the gradient flow in the Wasserstein-Fisher-Rao space\cite{chizat2018interpolating} rather than the Wasserstein space and ensures convergence. 

Secondly, as shown in the derivation in Section \ref{ensemble}, the two-layer network approximation is just the lowest order approximation to the deep residual network and it is interesting to explore the higher order terms.

\section*{Acknowledgments}
We thank the hospitality of the American Institute of Mathematics (AIM) for the workshop ``Deep learning and partial differential equation'' in October 2019, which led to this collaborative effort.
Yiping Lu also thanks Denny Wu and Xuechen Li for helpful comments and feedback. The work of Jianfeng Lu is supported in part by the National Science Foundation via grants DMS-1454939 and CCF-1934964 (Duke TRIPODS).
The work of Lexing Ying is supported in part by the National Science Foundation via grant DMS-1818449.

\bibliographystyle{siamplain}
\bibliography{mybibtex}

\end{document}